\newtheorem{thm}{Theorem}[section]
\newtheorem{lemma}[thm]{Lemma}
\newtheorem{prop}[thm]{Proposition}
\begin{document}

\title{Theoretical Interpretation of Learned Step Size in Deep-Unfolded Gradient Descent}

\author{%
  \IEEEauthorblockN{
  		Satoshi Takabe\IEEEauthorrefmark{1}\IEEEauthorrefmark{2} and Tadashi Wadayama\IEEEauthorrefmark{1}}
  \IEEEauthorblockA{\IEEEauthorrefmark{1}%
		Nagoya Institute of Technology,
		Gokiso, Nagoya, Aichi, 466-8555, Japan,\\
 		\{wadayama, s\_takabe\}@nitech.ac.jp} 
 \IEEEauthorblockA{\IEEEauthorrefmark{2}%
		RIKEN Center for Advanced Intelligence Project, Chuo-ku, Tokyo, 103-0027, Japan
 		} \\ 
}

\maketitle

\begin{abstract}
Deep unfolding is a promising deep-learning technique in which 
an iterative algorithm is unrolled to a deep network architecture with trainable parameters.
In the case of gradient descent algorithms,  
as a result of the training process, 
one often observes the acceleration of the convergence speed with 
learned non-constant step size parameters whose behavior 
is not intuitive nor interpretable from conventional theory.
In this paper, we provide a theoretical interpretation of the learned step size of 
deep-unfolded gradient descent (DUGD). 
We first prove that the training process of DUGD reduces not only the mean squared error loss 
but also the spectral radius related to the convergence rate.
Next, we show that minimizing the upper bound of the spectral radius naturally leads to 
the Chebyshev step which is a sequence of the step size based on Chebyshev polynomials.
The numerical experiments confirm that the Chebyshev steps qualitatively reproduce the learned step size parameters in DUGD, which  provides a plausible interpretation of the learned parameters.
Additionally, we show that the Chebyshev steps achieve the lower bound of the convergence rate for the first-order method in a specific limit without learning parameters {or} momentum terms.
\end{abstract}

\section{Introduction}
\label{intro}

Deep unfolding~\cite{gregor2010learning,hershey2014deep} is a promising deep learning approach
 whose architecture is based on existing iterative algorithms with tuning parameters such as step sizes in gradient descent (GD).
 The recursive structure of the algorithm is unrolled to a deep network and some parameters are embedded into the network. 
These parameters can be trained using standard deep learning techniques such as back propagation and 
stochastic GD if all the processes in the algorithm are differentiable.
{One notable advantage of deep unfolding is the acceleration of the convergence speed {that results from} tuning parameters compared with the original algorithm. 
Embedding proper trainable parameters also offers a flexible network structure to the algorithm that is applicable, for example, to inverse problems with/without prior information~\cite{monga2019algorithm}.}   
Since deep unfolding has been applied to iterative algorithms for compressed sensing~\cite{sprechmann2015learning,xin2016maximal,kamilov2016learning,borgerding2017amp,pmlr-v97-wu19b,ito2019trainable},
a number of deep unfolding-based algorithms have been proposed in various fields, such as image recovery~\cite{shi2017deep,jin2017deep,mardani2018neural,metzler2017learned,zhang2018ista,kellman2019physics}
 and wireless communications~\cite{nachmani2016learning,samuel2017deep,he2018model,8759948,yao2019sure,DBLP:conf/isit/WadayamaT19}.
 Recently, theoretical aspects of deep unfolding have also been investigated~\cite{chen2018theoretical,liu2018alista,mardani2019degrees}.

\begin{figure}[t]
   \centering
   \includegraphics[width=0.98\hsize]{./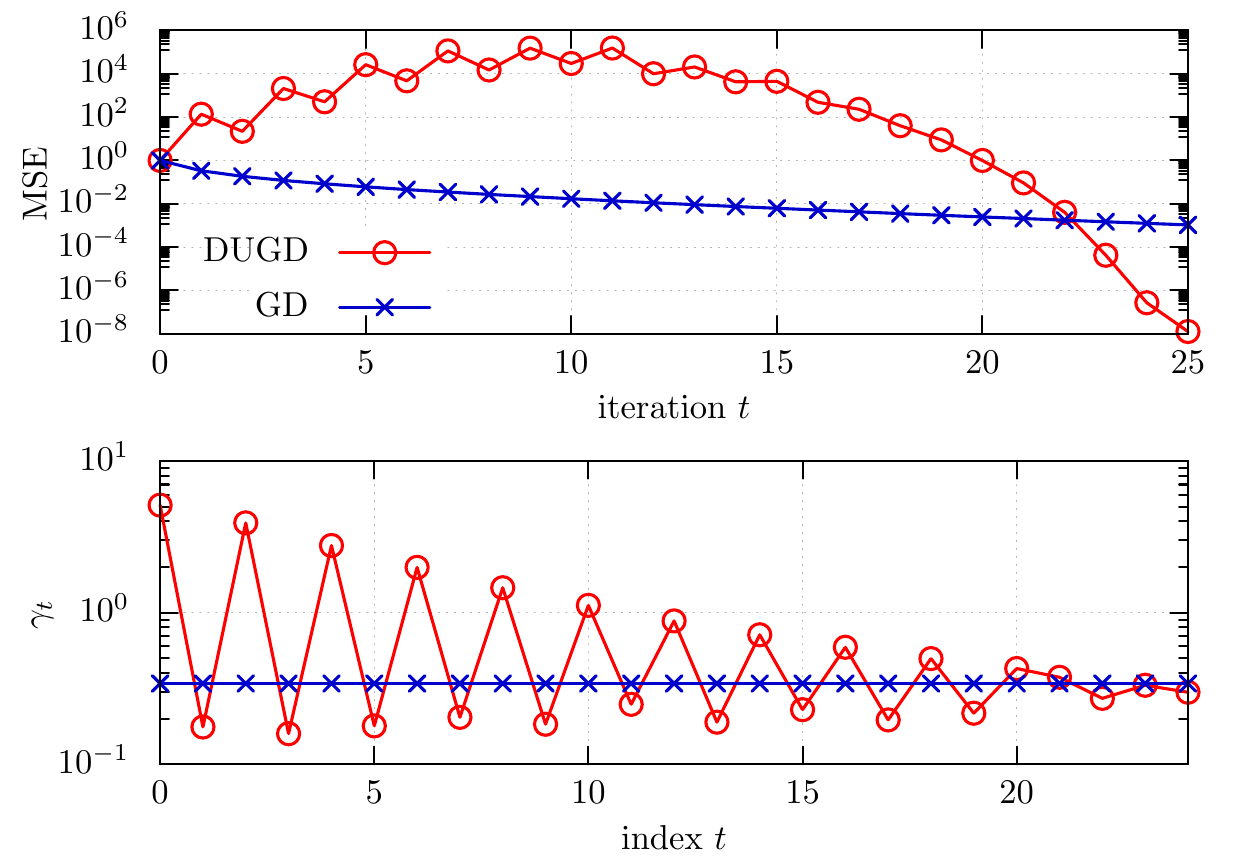}
    \caption{MSE performance (upper) and learned step size parameters $\{\gamma_t\}_{t=0}^{24}$ (lower) of DUGD (circles)
    and GD with a constant step size (cross marks) when $(n,m)=(300,600)$.
	The details of the experimental conditions are in Appendix A. 
}
    \label{fig_ga}
\end{figure}

To demonstrate deep unfolding,
we consider a simple least mean square (LMS) problem written as 
\begin{equation}
\bm{\hat \beta}:= \mbox{argmin}_{\bm \beta\in \mathbb{C}^n} \frac{1}{2}\|\bm y-\bm H \bm \beta\|_2^2,
\label{eq_LMS}
\end{equation}
where $\bm y :=\bm H \bm \beta + \bm n \in \mathbb{C}^m$ is the measurement vector with the noise vector {$\bm{n}$}  and measurement matrix $\bm{H}\in \mathbb{C}^{m\times n}$. 
Although the solution of (\ref{eq_LMS})  is explicitly given by $\bm{\hat \beta}=(\bm H^\ast\bm H)^{-1}\bm H^\ast\bm y$  ($m\ge n$) using the Hermitian transpose matrix $\bm H^\ast$,
GD is often used to reduce the computational complexity.
The recursive formula for GD is given by  
\begin{equation}
\bm{\beta}^{(t+1)}= \bm \beta^{(t)}{+}\gamma \bm H^\ast(\bm y - \bm H \bm \beta^{(t)}) \:(t=0,1,2,\dots),
\label{eq_GD1}
\end{equation}
where $\bm\beta^{(0)}$ is an initial vector and $\gamma$ is a step size parameter.
It is well known that the step size parameter {controls} the convergence speed of GD.
The optimal value of $\gamma$ is given by the largest and smallest eigenvalues of {$\bm H^\ast \bm H$} in the LMS problem, and it is found heuristically in general. 

Alternatively, we define deep-unfolded GD (DUGD) by 
\begin{equation}
\bm{\beta}^{(t+1)}= \bm \beta^{(t)}{+} \gamma_t \bm H^\ast(\bm y - \bm H \bm \beta^{(t)}),
\label{eq_GD2}
\end{equation}
where $\gamma_t$ is a trainable step size parameter that depends on the iteration index $t$.
The parameters $\{\gamma_t\}_{t=1}^T$ can be trained using training data {$\{(\bm{\tilde\beta}^{[k]}, \bm{\tilde y}^{[k]})\}_{k}$} by minimizing the loss function such as the mean squared error (MSE) {$\|\bm{\tilde \beta}-\bm\beta^{(T)}\|_2^2/n$} between the estimate $\bm{\beta}^{(T)}$ after $T$ iterations and {$\bm{\tilde \beta}$}. 
 Figure~\ref{fig_ga} shows the empirical results of the MSE performance (upper) and learned parameters $\{\gamma_t\}$ (lower) of DUGD and the original GD with the optimal constant step size when $(m,n)\!=\!(300,600)$ (see Appendix A for details).
We found that the learned parameter sequence had a \emph{zig-zag shape}, which accelerates the convergence speed compared with a naive GD with a constant step size. 
{These learned parameters are not intuitive or interpretable from conventional theory.}
This type of nontrivial learned step size parameters is observed not only for DUGD but also other deep-unfolded algorithms
that contain a nonlinear projection step~\cite{ito2019trainable,liu2018alista}.
Regarding an iterative soft thresholding algorithm for compressed sensing, it has been proved that large step sizes accelerate its convergence speed but {searching appropriate} step sizes is computationally difficult in practice~\cite{NIPS2019_9469}.

In this paper, the goal is to {provide a plausible interpretation of the learned parameters of DUGD and 
show that the parameters can accelerate the convergence speed of GD. }

The contributions of this paper are as follows:    

\begin{itemize}
\item {We show that minimizing the MSE loss in DUGD reduces the spectral radius related to 
 the convergence rate of GD. This suggests that appropriately learned step size parameters can improve the convergence rate.}
\item {
By minimizing the upper bound of the spectral radius, we derive 
Chebyshev steps that are a step size sequence based on Chebyshev polynomials.
Numerical experiments confirm} that the Chebyshev steps qualitatively reproduce the learned step size parameters in DUGD.
\item We perform convergence analysis of GD with the Chebyshev steps, which shows that the Chebyshev steps {improve} the convergence speed. Additionally, the convergence rate approaches the lower bound of first-order methods in a specific case, even though {it does not require a momentum term.  
The numerical results support the analysis, and we demonstrate an application to ridge regression.}
\end{itemize}

\textbf{Related works:} GD is a fundamental algorithm in continuous optimization~\cite{fletcher2013practical}.
GD was originally proposed by Cauchy~\cite{cauchy1847methode} and is known as the steepest descent algorithm.
The convergence rate of GD with a line search method that includes Cauchy's method is analyzed by Forsythe~\cite{forsythe1968asymptotic}. 
The acceleration of the convergence speed is a crucial issue {in the literature}. 
A well-known technique is the use of a momentum term. 
This originated from the heavy ball method~\cite{polyak1964some}, which is simply called the momentum method~\cite{1986Natur.323..533R} in the machine learning community.
Chebyshev semi-iterative method~\cite{Golub1989} and Nesterov's accelerated GD~\cite{nesterov1998introductory}
are other algorithms that use a momentum term. 
For convex quadratic problems, it has been proved that these algorithms with momentum terms are optimal because their convergence rates are proportional to the lower bound of first-order methods~\cite{nesterov1998introductory,lessard2016analysis}.
In this paper, we consider GD with a step size sequence and without momentum terms, which matches the recursive relation of DUGD.

\section{Deep-unfolded gradient descent}\label{sec_pre}
{In this paper}, we consider the minimization of a convex quadratic function 
$f(\bm{x}) = \bm{x}^T \bm{A} \bm{x}/2$ where $\bm A \in \mathbb{C}^{n \times n}$ is the Hermitian positive definite matrix and $\bm{x}_{\mathrm{opt}} = \bm{0}$ is its solution. {Note that this minimization problem corresponds to the LMS problem (\ref{eq_LMS}) under a proper transformation.}

The corresponding GD algorithm with the step size sequence $\{\gamma_t\}$ is given by 
\begin{equation}
	\bm{x}^{(t+1)} = (\bm{I}_n - \gamma_t \bm{A}) \bm{x}^{(t)} := \bm W^{(t)}\bm{x}^{(t)},\label{eq_W}
\end{equation}
where $\bm{I}_n$ is the identity matrix of order $n$ and  $\bm{x}^{(0)}$ is an arbitrary point in $\mathbb{C}^n$.

In DUGD, we first fix the total number of iterations $T(\ll n)$~\footnote{If $n<T$, GD always converges to the optimal solution {after $n$ iterations by setting} step sizes to the reciprocal of eigenvalues of $\bm A$. We thus omit this case. } and train the step size parameters $\{\gamma_t\}_{t=0}^{T-1}$.
Training these parameters is typically executed by minimizing the MSE loss function 
{$L(\bm x^{(T)}):=\|\bm x^{(T)}-\bm{x}_{\mathrm{opt}}\|_2^2/n$} between the output $\bm x^{(T)}$ of DUGD and the true solution $\bm{x}_{\mathrm{opt}} = \bm{0}$.
Additionally, to ensure the convergence of DUGD, we assume that DUGD uses a learned parameter sequence $\{\tilde\gamma_t\}_{t=0}^{T-1}$ repeatedly for $t>T$. 
Specifically, in the $t$th iteration, we assume that $\gamma_t := \tilde\gamma_{t'}$, where $t'\equiv t $ (mod $T$).
In this case, the {output} after every $T$ steps is written as
\begin{equation}
	\bm{x}^{((k+1)T)} = \left(\prod_{t=0}^{T-1}\bm W^{(t)}\right)\bm{x}^{(kT)} := \bm Q^{(T)}\bm{x}^{(kT)},
\label{eq_TPGT}
\end{equation}
for any $k=0,1,2,\dots$. Note that $\bm{Q}^{(T)}$ is a function of step size parameters $\{\gamma_t\}_{t=0}^{T-1}$.

Our motivation is to show that a proper step size parameter sequence $\{\gamma_t\}$ accelerates the convergence speed of GD. 
In this setup, an asymptotic convergence speed {with respect to the error between an estimate and the optimal solution} can be measured using the spectral radius of  a matrix $\bm Q^{(T)}$.
Let $\tau_1,\dots, \tau_n$ be the eigenvalues of  the matrix $\bm Q \in\mathbb{C}^{n\times n}$.
Then, the spectral radius of $\bm Q$ is defined as 
\begin{equation}
	\rho(\bm{Q}) := \max_{i\in\{1,\dots,n\}} \{|\tau_i| \}.
\end{equation}
For GD defined by (\ref{eq_TPGT}), it converges to the optimal solution if $\rho(\bm Q^{(T)})<1$ holds.
Additionally, because the error between $\bm{x}^{(kT)}$ and the optimal solution is bounded using $\rho(\bm Q^{(T)})$, the spectral radius indicates the asymptotic convergence rate of the algorithm.  

\section{Theoretical analysis}\label{sec_ana}

In this section, we show the following three facts: (i) minimizing the MSE loss in DUGD also reduces the spectral radius $\rho(\bm Q^{(T)})$, (ii) the step size parameter sequence defined by the explicit form minimizes the 
upper bound of the spectral radius $\rho(\bm Q^{(T)})$, and (iii) its convergence rate is smaller than a naive GD with a constant step size and asymptotically approaches the lower bound of the first order method.
These facts suggest that DUGD possibly accelerates the convergence speed by tuning step sizes properly.  

\subsection{Spectral radius and loss minimization}

The training process of deep unfolding consists of minimizing a loss function.
We show that minimizing a typical MSE loss also reduces the spectral radius $\rho(\bm Q^{(T)})$.

Before describing this claim, we first show the relation of $\rho(\bm Q^{(T)})$ to the eigenvalues of $\bm A$.
Recall that the Hermitian positive definite matrix $\bm A$ has $n$ positive eigenvalues including degeneracy.
{Hereafter, we assume that $\lambda_1\neq\lambda_n$ to avoid a trivial case.}

\begin{lemma}\label{lem_rho}
Let $\{\lambda_i\}_{i=1}^{n}$ be an eigenvalue sequence of $\bm{A}$ satisfying
 $(0<)\lambda_1\le \lambda_2\le \dots\le \lambda_n$. 
Then, we have
\begin{equation}
\rho(\bm Q^{(T)})=\max_{i=1,\dots,n}\left|\prod_{t=0}^{T-1}(1-\gamma_t\lambda_i)\right|.
\label{eq_rho}
\end{equation}
\end{lemma}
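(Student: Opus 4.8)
The plan is to exploit the simultaneous diagonalizability of the factors $\bm{W}^{(t)} = \bm{I}_n - \gamma_t\bm{A}$. Since $\bm{A}$ is Hermitian positive definite, the spectral theorem supplies a unitary matrix $\bm{U}$ with $\bm{A} = \bm{U}\bm{\Lambda}\bm{U}^\ast$, where $\bm{\Lambda} = \mathrm{diag}(\lambda_1,\dots,\lambda_n)$ collects the positive eigenvalues. First I would rewrite every factor in this common basis as $\bm{W}^{(t)} = \bm{U}(\bm{I}_n - \gamma_t\bm{\Lambda})\bm{U}^\ast$. The decisive observation is that the diagonalizing unitary $\bm{U}$ is the \emph{same} for all $t$, because each $\bm{W}^{(t)}$ is a polynomial (in fact affine) function of the single matrix $\bm{A}$.

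Next, I would form the product (\ref{eq_TPGT}) and let the orthogonal factors telescope via $\bm{U}^\ast\bm{U} = \bm{I}_n$, which collapses the expression to
\begin{equation}
\bm{Q}^{(T)} = \bm{U}\left(\prod_{t=0}^{T-1}(\bm{I}_n - \gamma_t\bm{\Lambda})\right)\bm{U}^\ast .
\end{equation}
Because each $\bm{I}_n - \gamma_t\bm{\Lambda}$ is diagonal, so is the bracketed product, with $i$th diagonal entry $\prod_{t=0}^{T-1}(1-\gamma_t\lambda_i)$. Hence $\bm{Q}^{(T)}$ is unitarily similar to this diagonal matrix, and its eigenvalues are precisely $\tau_i = \prod_{t=0}^{T-1}(1-\gamma_t\lambda_i)$ for $i=1,\dots,n$. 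Substituting these into the definition of the spectral radius then yields $\rho(\bm{Q}^{(T)}) = \max_i|\tau_i| = \max_i\left|\prod_{t=0}^{T-1}(1-\gamma_t\lambda_i)\right|$, which is exactly (\ref{eq_rho}).

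Honestly, there is no serious obstacle here: the entire argument rests on the single structural fact that all factors share the eigenbasis of $\bm{A}$, so the matrix product reduces to componentwise multiplication along the diagonal. The only point worth flagging is \emph{why} this works, namely that the DUGD recursion uses a pure step-size sequence with no momentum term, so each $\bm{W}^{(t)}$ remains a function of $\bm{A}$ alone; were a momentum term introduced, the factors would no longer commute and the eigenvalues of $\bm{Q}^{(T)}$ would not factor so cleanly. This clean factorization is precisely what makes the subsequent step of minimizing the upper bound on $\rho(\bm{Q}^{(T)})$ tractable, since it reduces the spectral-radius control to a scalar minimax problem over the eigenvalue interval $[\lambda_1,\lambda_n]$.
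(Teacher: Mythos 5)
Your proof is correct and takes essentially the same route as the paper: the paper simply notes that $\bm{Q}^{(T)} = p(\bm{A})$ for the polynomial $p(x)=\prod_{t=0}^{T-1}(1-\gamma_t x)$ and cites the polynomial spectral mapping fact, whereas you derive that fact explicitly by unitarily diagonalizing the Hermitian matrix $\bm{A}$ and telescoping the product. If anything, your version is slightly more self-contained, since the cited fact alone only yields the inclusion $\{p(\lambda_i)\}_{i=1}^n \subseteq \mathrm{spec}\bigl(p(\bm{A})\bigr)$, and the equality needed for the lemma uses the full eigenbasis guaranteed by Hermiticity --- precisely what your explicit diagonalization supplies.
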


\begin{proof}
This is directly derived from (\ref{eq_W}) and the following fact: for a polynomial $p(x)$ with complex coefficients, if $\lambda$ is an eigenvalue of $\bm A$ associated with the eigenvector $\bm x$, then $p(\lambda)$ is an eigenvalue of the matrix $p(\bm A)$ associated with the eigenvector $\bm x$~\cite[p. 4-11, 39]{hogben2013handbook}.
\end{proof}

Using this lemma, we have the following theorem.
   
\begin{thm}\label{thm_spec}
Let $\bm{x}^{(0)}\in\mathbb{C}^n$ be a random variable over an isotropic probability density function $p(\bm{x}^{(0)})$ satisfying $0<\mathsf{E}_{\bm x^{(0)}}\|\bm x^{(0)}\|_2^2<\infty$.
Then, for any $T\in\mathbb{N}$, there exists a positive constant $C$ satisfying
\begin{equation}
{\rho(\bm Q^{(T)})}
\le C \sqrt{n\mathsf{E}_{\bm x^{(0)}}L(\bm x^{(T)})}.
\label{eq_mse}\end{equation}
\end{thm}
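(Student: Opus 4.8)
The plan is to exploit the spectral structure that Lemma~\ref{lem_rho} already exposes, and to turn the expected loss into a weighted sum of the very quantities whose maximum is $\rho(\bm Q^{(T)})$. Since $\bm A$ is Hermitian positive definite, it admits an orthonormal eigenbasis $\{\bm v_i\}_{i=1}^n$ with $\bm A\bm v_i=\lambda_i\bm v_i$. First I would expand the random initial point as $\bm x^{(0)}=\sum_{i=1}^n c_i\bm v_i$ with $c_i:=\bm v_i^\ast\bm x^{(0)}$, and use the polynomial-eigenvalue fact cited in Lemma~\ref{lem_rho} to write $\bm Q^{(T)}\bm v_i=q_i\bm v_i$ with $q_i:=\prod_{t=0}^{T-1}(1-\gamma_t\lambda_i)\in\mathbb{R}$. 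Because $\bm x^{(T)}=\bm Q^{(T)}\bm x^{(0)}$ and the $\bm v_i$ are orthonormal, this gives the clean identity
\begin{equation}
\|\bm x^{(T)}\|_2^2=\sum_{i=1}^n|c_i|^2\,q_i^2 .
\label{eq_planA}
\end{equation}

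Next I would take the expectation over $\bm x^{(0)}$ and bring in the isotropy hypothesis. Rotational invariance of $p(\bm x^{(0)})$ forces the second-moment matrix to be a scalar multiple of the identity, i.e.\ $\mathsf{E}_{\bm x^{(0)}}[\bm x^{(0)}(\bm x^{(0)})^\ast]=\sigma^2\bm I_n$ with $\sigma^2=\mathsf{E}_{\bm x^{(0)}}\|\bm x^{(0)}\|_2^2/n$, which is finite and strictly positive by assumption. Consequently $\mathsf{E}_{\bm x^{(0)}}|c_i|^2=\bm v_i^\ast(\sigma^2\bm I_n)\bm v_i=\sigma^2$ for every $i$, independent of the index. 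Taking expectations in \eqref{eq_planA} then yields $\mathsf{E}_{\bm x^{(0)}}\|\bm x^{(T)}\|_2^2=\sigma^2\sum_{i=1}^n q_i^2$.

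To finish, I would lower-bound the sum by its largest term, $\sum_{i=1}^n q_i^2\ge\max_i q_i^2=\rho(\bm Q^{(T)})^2$, where the last equality is exactly Lemma~\ref{lem_rho}. Rearranging gives
\begin{equation}
\rho(\bm Q^{(T)})^2\le\frac{1}{\sigma^2}\,\mathsf{E}_{\bm x^{(0)}}\|\bm x^{(T)}\|_2^2
=\frac{n}{\mathsf{E}_{\bm x^{(0)}}\|\bm x^{(0)}\|_2^2}\,\mathsf{E}_{\bm x^{(0)}}\|\bm x^{(T)}\|_2^2 .
\label{eq_planB}
\end{equation}
Since $L(\bm x^{(T)})=\|\bm x^{(T)}\|_2^2/n$, we have $\mathsf{E}_{\bm x^{(0)}}\|\bm x^{(T)}\|_2^2=n\,\mathsf{E}_{\bm x^{(0)}}L(\bm x^{(T)})$, and taking square roots in \eqref{eq_planB} produces the claimed bound with the explicit constant $C=\sqrt{n/\mathsf{E}_{\bm x^{(0)}}\|\bm x^{(0)}\|_2^2}$, which is positive and finite precisely because $0<\mathsf{E}_{\bm x^{(0)}}\|\bm x^{(0)}\|_2^2<\infty$.

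The step I expect to carry the real weight is the isotropy argument: it is what guarantees that $\mathsf{E}|c_i|^2$ is the \emph{same} for all $i$, so that no eigendirection is systematically starved of energy and the passage from the full sum $\sum_i q_i^2$ to its maximal term $\rho(\bm Q^{(T)})^2$ loses only a controllable, index-independent factor. Without this uniformity, a coefficient $c_{i^\star}$ aligned with the dominant mode could have small expected magnitude, and the maximal $q_{i^\star}^2$ would be masked in the expected loss, breaking the lower bound. The remaining manipulations — the eigenbasis expansion, orthonormality, and the square-root rearrangement — are routine once \eqref{eq_planA} is in hand.
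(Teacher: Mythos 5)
Your proof is correct and follows essentially the same route as the paper's: diagonalize $\bm A$, write the expected loss as a sum over eigendirections of $|q_i|^2$ weighted by the squared coefficients of $\bm x^{(0)}$, and lower-bound that sum by the single term achieving the spectral radius, with isotropy guaranteeing that coefficient's second moment is positive. The only difference is one of refinement: the paper merely asserts positivity of $C'=\mathsf{E}_{\bm x^{(0)}}\|\bm u_j^\ast\bm x^{(0)}\|^2$ for the maximizing direction $j$, whereas you use isotropy to evaluate every such moment exactly as $\mathsf{E}_{\bm x^{(0)}}\|\bm x^{(0)}\|_2^2/n$, which yields the explicit, index-independent constant $C=\sqrt{n/\mathsf{E}_{\bm x^{(0)}}\|\bm x^{(0)}\|_2^2}$ (consistent with the paper's numerical value $C=1/\sqrt{2}$).
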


The details of the proof are in Appendix B.
{This theorem claims that minimizing the MSE loss function in DUGD reduces
 the corresponding spectral radius of $\bm Q^{(T)}$, which implies that appropriately learned step size parameters can accelerate the convergence speed of DUGD.}

\subsection{Chebyshev step}
{In this subsection, our aim is to {determine} a step size sequence that reduces the spectral radius to understand the nontrivial step size sequence of DUGD.
When $T\ge 2$, minimizing $\rho(\bm Q^{(T)})$ with respect to $\{\gamma_t\}_{t=0}^{T-1}$ is a {non-convex problem} in general. 
We alternatively} introduce a step size parameter sequence that bounds the spectral radius from above.

We first recall a well-known result when $T=1$, that is, a constant step size case~\cite[Section 1.3]{9781886529052}.
\begin{prop}\label{prop_fix}
Let  $\lambda_1(>0)$ and  $\lambda_n$ be the minimum and maximum eigenvalues of $\bm{A}$, respectively.
When $T=1$, the step size parameter that minimizes $\rho(\bm Q^{(1)})$  is given by
\begin{equation}
\gamma^\ast := \frac{2}{\lambda_1+\lambda_n}. \label{eq_single}
\end{equation} 
\end{prop}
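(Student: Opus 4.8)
The plan is to reduce the $T=1$ case of the spectral-radius minimization to a clean one-dimensional convex problem via Lemma~\ref{lem_rho}. When $T=1$, Equation~\eqref{eq_rho} collapses to $\rho(\bm Q^{(1)})=\max_{i}|1-\gamma\lambda_i|$, so the objective is the maximum over the eigenvalue spectrum of the absolute value of the affine map $\lambda\mapsto 1-\gamma\lambda$. Since each eigenvalue $\lambda_i$ lies in the interval $[\lambda_1,\lambda_n]$ and $\lambda\mapsto|1-\gamma\lambda|$ is, for fixed $\gamma$, a convex function of $\lambda$, its maximum over the interval is attained at one of the two endpoints $\lambda_1$ or $\lambda_n$. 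Hence the problem reduces to minimizing $g(\gamma):=\max\{|1-\gamma\lambda_1|,\,|1-\gamma\lambda_n|\}$ over $\gamma$.

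First I would establish that $g(\gamma)$ is a convex function of $\gamma$ (it is a pointwise maximum of two affine-in-$\gamma$ absolute-value terms, hence convex), so any critical point is a global minimizer. The key geometric observation is that the two inner functions $\gamma\mapsto|1-\gamma\lambda_1|$ and $\gamma\mapsto|1-\gamma\lambda_n|$ are V-shaped with minima at $1/\lambda_1$ and $1/\lambda_n$ respectively. For small $\gamma$ the term with the larger eigenvalue $\lambda_n$ dominates the decrease, while for large $\gamma$ the term $|1-\gamma\lambda_n|$ grows fastest; the minimum of the maximum therefore occurs where the two competing branches cross. Setting $1-\gamma\lambda_1=-(1-\gamma\lambda_n)$ (i.e. where the decreasing branch $1-\gamma\lambda_1$ meets the increasing branch $\gamma\lambda_n-1$) and solving for $\gamma$ yields precisely $\gamma^\ast=2/(\lambda_1+\lambda_n)$.

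To finish, I would verify this is indeed the minimizer rather than merely a crossing point: at $\gamma=\gamma^\ast$ both endpoint terms equal the common value $(\lambda_n-\lambda_1)/(\lambda_n+\lambda_1)<1$, and I would check that perturbing $\gamma$ in either direction increases the maximum. Moving $\gamma$ below $\gamma^\ast$ increases $|1-\gamma\lambda_n|$ (the $\lambda_n$ branch is past its minimum $1/\lambda_n<\gamma^\ast$ and rising), while moving $\gamma$ above $\gamma^\ast$ increases $|1-\gamma\lambda_1|$; either way $g$ strictly increases, confirming optimality.

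I do not expect a serious obstacle here, since the argument is a standard minimax equalization. The one point deserving care is the reduction step: justifying that the maximum over all $n$ eigenvalues is controlled entirely by the extreme eigenvalues $\lambda_1$ and $\lambda_n$, which follows from the convexity of $\lambda\mapsto|1-\gamma\lambda|$ on $[\lambda_1,\lambda_n]$ and the assumption $\lambda_1\neq\lambda_n$ that rules out the degenerate case. With that in hand, the remainder is the elementary one-variable optimization sketched above.
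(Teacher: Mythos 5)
Your strategy is sound, and it is worth noting that the paper itself gives no proof of Proposition~\ref{prop_fix}: it simply cites \cite[Section 1.3]{9781886529052}, so your write-up supplies the standard minimax-equalization argument that the citation points to. The reduction via Lemma~\ref{lem_rho} is correct: $\rho(\bm Q^{(1)})=\max_i|1-\gamma\lambda_i|$, the map $\lambda\mapsto|1-\gamma\lambda|$ is convex so its maximum over $[\lambda_1,\lambda_n]$ is attained at an endpoint, and since $\lambda_1,\lambda_n$ are themselves eigenvalues the bound is attained, giving $\rho(\bm Q^{(1)})=g(\gamma):=\max\{|1-\gamma\lambda_1|,|1-\gamma\lambda_n|\}$. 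Equalizing the decreasing branch $1-\gamma\lambda_1$ with the increasing branch $\gamma\lambda_n-1$ then yields \eqref{eq_single}, with common value $(\lambda_n-\lambda_1)/(\lambda_n+\lambda_1)<1$.

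However, your final verification paragraph swaps the two branches, and as written it contradicts the parenthetical justification you attach to it. Since $1/\lambda_n<\gamma^\ast<1/\lambda_1$ (both inequalities use $\lambda_1<\lambda_n$), at $\gamma=\gamma^\ast$ the $\lambda_n$ branch is $|1-\gamma\lambda_n|=\gamma\lambda_n-1$, which is \emph{rising} in $\gamma$; hence it is moving $\gamma$ \emph{above} $\gamma^\ast$ that increases it. Symmetrically, the $\lambda_1$ branch $|1-\gamma\lambda_1|=1-\gamma\lambda_1$ is still falling at $\gamma^\ast$, so it is moving $\gamma$ \emph{below} $\gamma^\ast$ that increases it. You state the opposite pairing in both directions. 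The conclusion survives once the labels are exchanged: for $\gamma<\gamma^\ast$,
\begin{equation}
g(\gamma)\ \ge\ 1-\gamma\lambda_1\ >\ 1-\gamma^\ast\lambda_1\ =\ g(\gamma^\ast),
\end{equation}
and for $\gamma>\gamma^\ast$,
\begin{equation}
g(\gamma)\ \ge\ \gamma\lambda_n-1\ >\ \gamma^\ast\lambda_n-1\ =\ g(\gamma^\ast),
\end{equation}
so $\gamma^\ast$ is the unique minimizer. This is a labeling slip rather than a gap in the method, but the monotonicity claims as literally written are false and must be corrected. A way to avoid the case analysis altogether: for $\gamma\ge 0$ one has $1-\gamma\lambda_n\le 1-\gamma\lambda_1$ and $\gamma\lambda_1-1\le\gamma\lambda_n-1$, so $g(\gamma)=\max\{1-\gamma\lambda_1,\ \gamma\lambda_n-1\}$ is the maximum of one strictly decreasing and one strictly increasing affine function (negative $\gamma$ gives $g\ge 1$ and is never optimal), and such a maximum is minimized exactly at the crossing point, which is $\gamma^\ast$.
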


{In the general case in which $T\ge 2$,
we focus on the step size sequence that minimizes the upper bound $\rho^{\mathrm{upp}}(\bm Q^{(T)})$ of the spectral radius $\rho(\bm Q^{(T)})$. The upper bound is} given by 
\begin{align}
\rho(\bm Q^{(T)})&=\max_{i=1,\dots,n}\left|\prod_{t=0}^{T-1}(1-\gamma_t\lambda_i)\right|\nonumber\\
&\le  \max_{ \lambda\in [\lambda_1,\lambda_n] }\left|\prod_{t=0}^{T-1}(1-\gamma_t\lambda)\right|:= \rho^{\mathrm{upp}}(\bm Q^{(T)}).\label{eq_upp}
\end{align}
Note that this upper bound is commonly analyzed~\cite[Section 3.4]{mason2002chebyshev}.

Next, we introduce a step size parameter sequence called Chebyshev steps which minimizes the above upper bound.
\begin{thm}
Let  $\lambda_1(>0)$ and  $\lambda_n$ be the minimum and maximum eigenvalues of $\bm{A}$, respectively.
For a given $T\in\mathbb{N}$, we define \emph{Chebyshev steps} $\{\gamma_t\}_{t=0}^{T-1}$ \emph{of length $T$} as
\begin{equation}
\gamma_t := \left[\frac{\lambda_n+\lambda_1}{2}+\frac{\lambda_n-\lambda_1}{2}\cos\left(\frac{2t+1}{2T}\pi\right)\right]^{-1}.
\label{eq_chstep}
\end{equation} 
Then, the Chebyshev steps is a sequence that minimizes the upper bound $\rho^{\mathrm{upp}}(\bm Q^{(T)})$ of spectral radius of $\bm Q^{(T)}$.
\end{thm}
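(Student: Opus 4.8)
The plan is to recognize the minimization of the upper bound $\rho^{\mathrm{upp}}(\bm Q^{(T)})$ in (\ref{eq_upp}) as the classical polynomial minimax problem and to identify its solution with a scaled Chebyshev polynomial. First I would observe that $P(\lambda):=\prod_{t=0}^{T-1}(1-\gamma_t\lambda)$ is a polynomial of degree $T$ in $\lambda$ obeying the normalization $P(0)=1$, and that its roots are exactly the reciprocals $1/\gamma_t$. Conversely, any polynomial of degree $T$ with $P(0)=1$ and nonzero real roots $\{r_t\}$ arises from step sizes $\gamma_t=1/r_t$. Hence minimizing $\rho^{\mathrm{upp}}(\bm Q^{(T)})=\max_{\lambda\in[\lambda_1,\lambda_n]}|P(\lambda)|$ over $\{\gamma_t\}$ is the restriction, to this product class, of the classical problem of minimizing $\max_{\lambda\in[\lambda_1,\lambda_n]}|P(\lambda)|$ over all polynomials $P$ of degree $\le T$ with $P(0)=1$; I would then show that the unconstrained minimizer already lies in the product class, so the two minima coincide.

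Next I would introduce the affine change of variable $x(\lambda)=\frac{2\lambda-(\lambda_1+\lambda_n)}{\lambda_n-\lambda_1}$, which maps $[\lambda_1,\lambda_n]$ onto $[-1,1]$ and sends $\lambda=0$ to $x_0=-\frac{\lambda_n+\lambda_1}{\lambda_n-\lambda_1}$, a point lying strictly outside $[-1,1]$ since $0<\lambda_1<\lambda_n$. Writing $\mathcal{T}_k(x)=\cos(k\arccos x)$ for the degree-$k$ Chebyshev polynomial of the first kind, the candidate optimum is $P^\star(\lambda)=\mathcal{T}_T(x(\lambda))/\mathcal{T}_T(x_0)$: by construction $P^\star(0)=1$, and since $|\mathcal{T}_T|\le 1$ on $[-1,1]$ with equality attained, $\max_{\lambda\in[\lambda_1,\lambda_n]}|P^\star(\lambda)|=1/|\mathcal{T}_T(x_0)|$.

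The crux of the argument, and the step I expect to be the main obstacle, is the extremal property $\min_P\max_{[\lambda_1,\lambda_n]}|P|=1/|\mathcal{T}_T(x_0)|$, i.e. that no admissible polynomial does strictly better. I would prove this by the standard equioscillation argument: $\mathcal{T}_T$ attains $\pm 1$ with alternating signs at the $T+1$ nodes $\cos(k\pi/T)$, $k=0,\dots,T$, so if some admissible $q$ had strictly smaller maximum, the difference $P^\star-q$ would inherit the alternating signs and change sign across each of the $T$ gaps between consecutive nodes, producing $T$ roots in $(-1,1)$; together with the root forced at $x_0$ by the shared normalization $P^\star(0)=q(0)=1$, this gives $T+1$ roots of a polynomial of degree $\le T$ (in $x$), forcing $P^\star\equiv q$ and contradicting the strict improvement.

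Finally I would read off the step sizes from the roots of $P^\star$. The roots of $\mathcal{T}_T(x(\lambda))$ are the Chebyshev nodes $x=\cos\left(\frac{2t+1}{2T}\pi\right)$, $t=0,\dots,T-1$, which pull back under $x(\cdot)$ to $\lambda=\frac{\lambda_n+\lambda_1}{2}+\frac{\lambda_n-\lambda_1}{2}\cos\left(\frac{2t+1}{2T}\pi\right)$. Since these roots are precisely the reciprocals $1/\gamma_t$, inverting yields exactly the Chebyshev steps of (\ref{eq_chstep}); moreover every node lies in $[\lambda_1,\lambda_n]\subset(0,\infty)$, so the resulting $\gamma_t$ are real and positive and $P^\star$ indeed factors as a genuine product $\prod_{t=0}^{T-1}(1-\gamma_t\lambda)$. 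This confirms that the optimal bound is attained within the product class and that the minimizer is the Chebyshev steps, completing the proof.
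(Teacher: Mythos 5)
Your proposal is correct and follows essentially the same route as the paper's Appendix C: identify $\prod_{t}(1-\gamma_t\lambda)$ with the shifted, normalized Chebyshev polynomial and establish its minimax optimality by the equioscillation argument at the $T+1$ extremal nodes, with the normalization $P(0)=1$ supplying the extra constraint (the paper phrases it as the difference polynomial having zero constant term, you phrase it as an extra root at $x_0$) that makes the root count contradict the degree bound. The only cosmetic difference is that you prove optimality over all degree-$\le T$ polynomials with $P(0)=1$ and then verify the minimizer factors into real linear factors, whereas the paper's Lemma works directly within that product class.
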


Note that the Chebyshev step {is identical} to the optimal constant step size in Proposition~\ref{prop_fix}  when $T=1$.

We describe a sketch of the proof. The complete version is available in Appendix C.
The function $\prod_{t=0}^{T-1} (1-\gamma_t \lambda)$ {with the Chebyshev steps $\{\gamma_t\}_{t=0}^{T-1}$} is represented by a Chebyshev polynomial  $C_T(x)$ of order $T$. 
Using the minimax property that {$2^{1-T}C_T(x)$} is a monic polynomial that minimizes the 
$\ell_\infty$-norm in the Banach space $B[-1,1]$~\cite[Col. 3.4B]{mason2002chebyshev},
we can prove that the Chebyshev steps minimize $\rho^{\mathrm{upp}}(\bm Q^{(T)})$. 

The reciprocal of the Chebyshev steps $z_t:=\gamma_t^{-1}$ corresponds to Chebyshev points, that is, the zeros of {the shifted Chebyshev polynomial of order $T$ defined on $[\lambda_1,\lambda_n]$.}
Figure~\ref{fig_poi} shows the Chebyshev points and Chebyshev steps when $T=7$, $\lambda_1=1$, and $\lambda_n=9$.
A Chebyshev point is defined as a point that is projected onto an axis from a point of degree $\theta_t=(2t+1)\pi/(2T)$ on a semi-circle (see right part of Figure~\ref{fig_poi}). 
The Chebyshev points are located symmetrically with respect to the center of the circle corresponding to $(\gamma^\ast)^{-1}=(\lambda_1+\lambda_n)/2$.
The Chebyshev steps are given by $\gamma_t=z_t^{-1}$, which is shown in the left part of Figure~\ref{fig_poi}.
We found that the Chebyshev steps are widely located 
compared with the optimal constant step size $\gamma^\ast=1/5$.

\begin{figure}[t]
   \centering
   \includegraphics[width=0.98\hsize]{./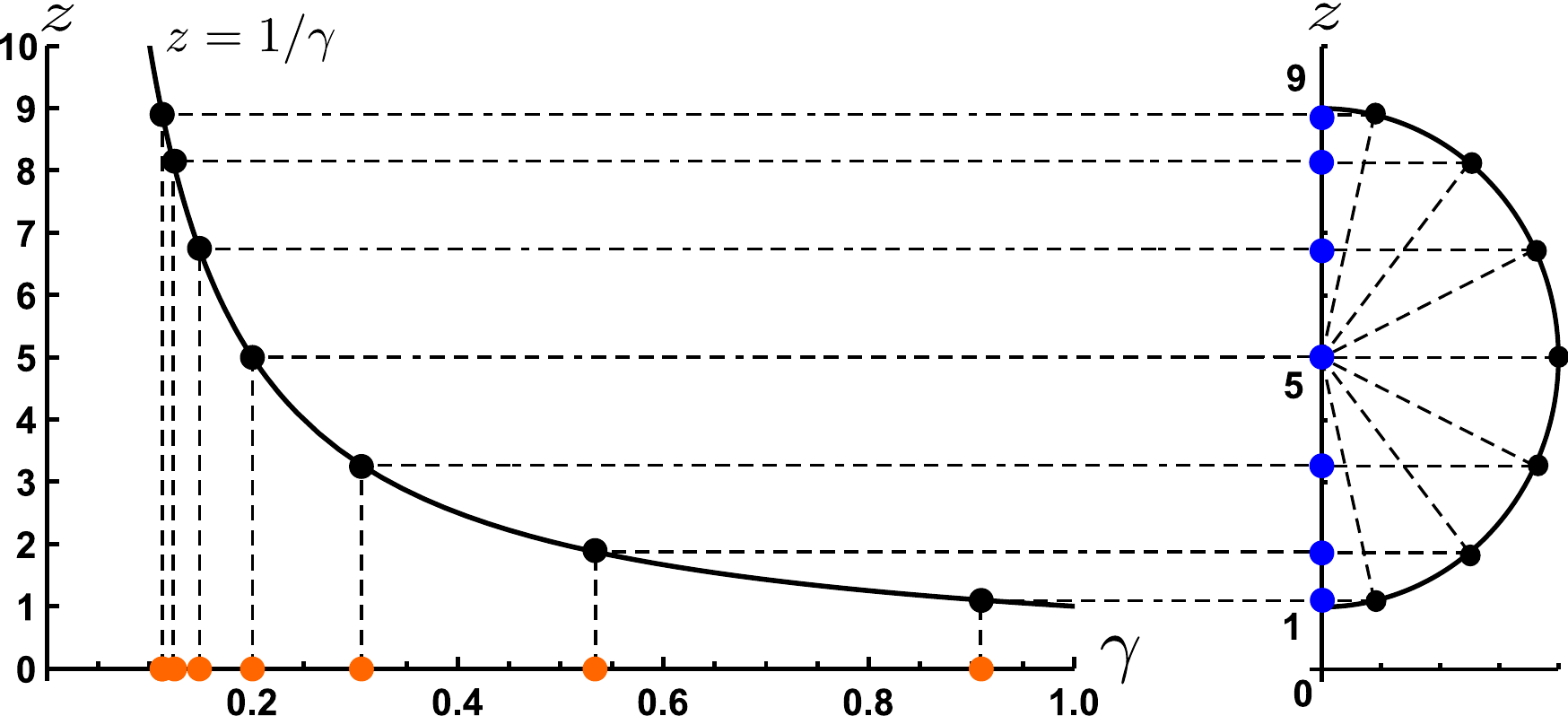}
    \caption{Chebyshev points $\{z_t\}_{t=0}^{T-1}$ (right; blue) and corresponding Chebyshev steps $\{\gamma_t\}_{t=0}^{T-1}$ (left; orange) when $T=7$, $\lambda_1=1$, and $\lambda_n=9$. 
}
    \label{fig_poi}
\end{figure}

\subsection{Convergence analysis}\label{sec_conv}

For convergence analysis, we show that {GD with the Chebyshev steps converges to the optimal solution.
Let $\bm Q^{(T)}_{\mathrm{Ch}}$ be the matrix $\bm{Q}^{(T)}$ with the Chebyshev steps of length $T$.}

\begin{prop}\label{prop_conv}
For any $k=0,1,2,\dots$ and ${T}\in\mathbb{N}$, we have
\begin{equation}
\|\bm x^{( (k+1)T)}-\bm{x}_{\mathrm{opt}} \|_2 \le   \rho^{\mathrm{upp}}(\bm Q^{(T)}_{\mathrm{Ch}}) \|\bm x^{( kT)}-\bm{x}_{\mathrm{opt}} \|_2.\label{eq_err}
\end{equation}
\end{prop}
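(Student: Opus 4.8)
The plan is to exploit the fact that $\bm{x}_{\mathrm{opt}} = \bm{0}$ together with the block recursion (\ref{eq_TPGT}) to rewrite the error after $T$ steps, and then to control the induced operator norm of $\bm Q^{(T)}_{\mathrm{Ch}}$ by its spectral radius. First I would note that, since $\bm{x}_{\mathrm{opt}} = \bm{0}$, equation (\ref{eq_TPGT}) specialized to the Chebyshev steps gives
$$\bm{x}^{((k+1)T)} - \bm{x}_{\mathrm{opt}} = \bm Q^{(T)}_{\mathrm{Ch}}\,\bm{x}^{(kT)} = \bm Q^{(T)}_{\mathrm{Ch}}\bigl(\bm{x}^{(kT)} - \bm{x}_{\mathrm{opt}}\bigr),$$
so taking Euclidean norms reduces the claim to the single inequality $\|\bm Q^{(T)}_{\mathrm{Ch}}\bm v\|_2 \le \rho^{\mathrm{upp}}(\bm Q^{(T)}_{\mathrm{Ch}})\,\|\bm v\|_2$ for every $\bm v\in\mathbb{C}^n$.

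The key structural observation is that $\bm Q^{(T)}_{\mathrm{Ch}} = \prod_{t=0}^{T-1}(\bm I_n - \gamma_t\bm A)$ is a polynomial in $\bm A$ with real coefficients, because the Chebyshev steps $\gamma_t$ in (\ref{eq_chstep}) are real; hence $\bm Q^{(T)}_{\mathrm{Ch}}$ is itself Hermitian, inheriting this from the Hermitian matrix $\bm A$. Next I would invoke the standard fact that for a Hermitian (more generally, normal) matrix the induced $\ell_2$ operator norm coincides with the spectral radius. This follows by diagonalizing $\bm A$ with a unitary matrix, which simultaneously diagonalizes every polynomial in $\bm A$ and in particular $\bm Q^{(T)}_{\mathrm{Ch}}$, yielding $\|\bm Q^{(T)}_{\mathrm{Ch}}\|_2 = \rho(\bm Q^{(T)}_{\mathrm{Ch}})$.

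Combining these, I would write
$$\|\bm Q^{(T)}_{\mathrm{Ch}}\bm v\|_2 \le \|\bm Q^{(T)}_{\mathrm{Ch}}\|_2\,\|\bm v\|_2 = \rho(\bm Q^{(T)}_{\mathrm{Ch}})\,\|\bm v\|_2,$$
and finally apply the elementary bound $\rho(\bm Q^{(T)}_{\mathrm{Ch}}) \le \rho^{\mathrm{upp}}(\bm Q^{(T)}_{\mathrm{Ch}})$ from (\ref{eq_upp}) to reach the stated inequality. The only point requiring genuine care is the norm--spectral-radius identity, which would fail for a generic non-normal matrix; here the Hermitian (equivalently, normal) character of $\bm Q^{(T)}_{\mathrm{Ch}}$, inherited from $\bm A$ together with the reality of the Chebyshev steps, is precisely what makes the argument valid, and I expect this to be the substantive step rather than the surrounding algebra. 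Since the bound holds uniformly for all $k$, the claimed per-block contraction follows immediately.
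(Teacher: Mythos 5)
Your proof is correct and follows essentially the same route as the paper's: rewrite the block update via (\ref{eq_TPGT}) using $\bm{x}_{\mathrm{opt}}=\bm{0}$, bound by the operator norm, identify that norm with the spectral radius using normality of $\bm Q^{(T)}_{\mathrm{Ch}}$, and finish with $\rho \le \rho^{\mathrm{upp}}$ from (\ref{eq_upp}). The only difference is that you justify the normality step explicitly (as Hermitianness of a real-coefficient polynomial in the Hermitian matrix $\bm A$), which the paper simply asserts.
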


\begin{proof}
{Let $\|\bm A\|_{\mathrm{op}}:=\sup_{\|\bm v\|_2=1} \|\bm A\bm v\|_2$ be an operator norm of $\bm{A}$.
Because $\bm Q^{(T)}_{\mathrm{Ch}}$ is a normal matrix, $\|\bm Q^{(T)}_{\mathrm{Ch}}\|_{\mathrm{op}}=\rho(\bm Q^{(T)}_{\mathrm{Ch}})$ holds. }
Using (\ref{eq_TPGT}), (\ref{eq_upp}), and $\bm{x}_{\mathrm{opt}}=\bm 0$, we have
\begin{align}
\|\bm x^{( (k+1)T)} \|_2 
&= \|\bm Q^{(T)}_{\mathrm{Ch}} \bm x^{( kT)} \|_2 \nonumber\\
&\le \|\bm Q^{(T)}_{\mathrm{Ch}}\|_{\mathrm{op}}  \|\bm x^{( kT)} \|_2 \nonumber\\
&= \rho(\bm Q^{(T)}_{\mathrm{Ch}})  \|\bm x^{( kT)} \|_2 \nonumber\\
&\le \rho^{\mathrm{upp}}(\bm Q^{(T)}_{\mathrm{Ch}}) \|\bm x^{( kT)} \|_2.
\end{align}
\end{proof}

The main claim in this subsection is that the Chebyshev steps of length $T(\ge 2)$ accelerate the convergence speed with respect to a spectral radius.

\begin{thm}\label{thm_main}
Let $\bm Q^{(T)}_{\mathrm{ch}}$ be the matrix $\bm Q^{(T)}$ with the Chebyshev steps of length $T(\ge 2)$.
We also define $\bm Q^{(T)}_{\mathrm{s}}$ as $\bm Q^{(T)}$ with the optimal constant step size, that is, $\gamma_0=\dots=\gamma_{T-1}=\gamma^\ast$.
Then, we have
\begin{equation}
\rho(\bm{Q}^{(T)}_{\mathrm{ch}}) < \rho(\bm Q^{(T)}_{\mathrm{s}}). \label{eq_ch_rho}
\end{equation}
\end{thm}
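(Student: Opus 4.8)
The plan is to reduce both spectral radii to explicit functions of the extreme eigenvalues and then separate them with a single convexity inequality. First I would evaluate $\rho(\bm Q^{(T)}_{\mathrm{s}})$ exactly. Since $\bm Q^{(T)}_{\mathrm{s}} = (\bm I_n - \gamma^\ast \bm A)^T$, its eigenvalues are $(1-\gamma^\ast\lambda_i)^T$, so $\rho(\bm Q^{(T)}_{\mathrm{s}}) = \max_i|1-\gamma^\ast\lambda_i|^T$. With $\gamma^\ast = 2/(\lambda_1+\lambda_n)$ from Proposition~\ref{prop_fix}, the map $\lambda\mapsto|1-\gamma^\ast\lambda|$ is a V-shaped function on $[\lambda_1,\lambda_n]$ whose maximum is attained at the two endpoints, where it equals $\mu := (\lambda_n-\lambda_1)/(\lambda_n+\lambda_1)$. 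Because $\lambda_1$ and $\lambda_n$ are genuine eigenvalues, this maximum is attained over the discrete spectrum as well, giving $\rho(\bm Q^{(T)}_{\mathrm{s}}) = \mu^T$.

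Next I would bound $\rho(\bm Q^{(T)}_{\mathrm{ch}})$ by its upper bound through (\ref{eq_upp}) and compute that upper bound in closed form. The polynomial $\prod_{t=0}^{T-1}(1-\gamma_t\lambda)$ built from the Chebyshev steps (\ref{eq_chstep}) has its $T$ roots exactly at the Chebyshev points, hence coincides up to a constant with the shifted Chebyshev polynomial $C_T\!\big(\tfrac{2\lambda-(\lambda_1+\lambda_n)}{\lambda_n-\lambda_1}\big)$; fixing the value $1$ at $\lambda=0$ pins the constant to $1/C_T(\xi)$ with $\xi := (\lambda_n+\lambda_1)/(\lambda_n-\lambda_1) > 1$. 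Since the shifted argument sweeps $[-1,1]$ as $\lambda$ ranges over $[\lambda_1,\lambda_n]$ and $|C_T|\le 1$ there, I get $\rho^{\mathrm{upp}}(\bm Q^{(T)}_{\mathrm{ch}}) = 1/C_T(\xi)$. Evaluating $C_T(\xi)=\cosh(T\,\mathrm{arccosh}\,\xi)=\tfrac12(\sigma^{-T}+\sigma^T)$ with $\sigma := (\sqrt{\lambda_n}-\sqrt{\lambda_1})/(\sqrt{\lambda_n}+\sqrt{\lambda_1})\in(0,1)$ — which follows from $\xi+\sqrt{\xi^2-1}=\sigma^{-1}$ — yields the clean form $\rho^{\mathrm{upp}}(\bm Q^{(T)}_{\mathrm{ch}}) = 2\sigma^T/(1+\sigma^{2T})$.

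The decisive observation is the algebraic identity $\mu = 2\sigma/(1+\sigma^2)$, so that $\rho(\bm Q^{(T)}_{\mathrm{s}}) = \big(2\sigma/(1+\sigma^2)\big)^T$. Assembling the chain
\[
\rho(\bm Q^{(T)}_{\mathrm{ch}}) \le \rho^{\mathrm{upp}}(\bm Q^{(T)}_{\mathrm{ch}}) = \frac{2\sigma^T}{1+\sigma^{2T}}, \qquad \rho(\bm Q^{(T)}_{\mathrm{s}}) = \left(\frac{2\sigma}{1+\sigma^2}\right)^T,
\]
the claim reduces to the strict inequality $\tfrac{2\sigma^T}{1+\sigma^{2T}} < \big(\tfrac{2\sigma}{1+\sigma^2}\big)^T$, i.e. $(1+\sigma^2)^T < 2^{T-1}(1+\sigma^{2T})$. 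This is exactly strict Jensen's inequality $\big(\tfrac{1+\sigma^2}{2}\big)^T < \tfrac{1+(\sigma^2)^T}{2}$ for the strictly convex map $x\mapsto x^T$ ($T\ge 2$) evaluated at the distinct points $1$ and $\sigma^2\in(0,1)$; strictness uses $T\ge 2$ and $\sigma^2\neq 1$ (guaranteed by $\lambda_1<\lambda_n$). Chaining the three relations gives $\rho(\bm Q^{(T)}_{\mathrm{ch}}) < \rho(\bm Q^{(T)}_{\mathrm{s}})$.

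I expect the main obstacle to be the closed-form evaluation of $\rho^{\mathrm{upp}}(\bm Q^{(T)}_{\mathrm{ch}})$: getting the normalizing constant $1/C_T(\xi)$ right and recognizing the $\cosh$ representation that produces the tractable expression $2\sigma^T/(1+\sigma^{2T})$, together with spotting the simplifying identity $\mu = 2\sigma/(1+\sigma^2)$ that makes the final comparison collapse to a one-line convexity argument. A more abstract alternative avoids these computations by invoking uniqueness of the minimax polynomial: the constant-step polynomial $(1-\gamma^\ast\lambda)^T$ differs from the Chebyshev minimizer for $T\ge 2$ (a $T$-fold root versus $T$ distinct roots), so $\rho^{\mathrm{upp}}(\bm Q^{(T)}_{\mathrm{ch}}) < \rho^{\mathrm{upp}}(\bm Q^{(T)}_{\mathrm{s}}) = \rho(\bm Q^{(T)}_{\mathrm{s}})$, but I prefer the explicit route since it also quantifies the gain.
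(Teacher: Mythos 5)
Your proof is correct, and it shares the paper's skeleton: both arguments run through the chain $\rho(\bm Q^{(T)}_{\mathrm{ch}}) \le \rho^{\mathrm{upp}}(\bm Q^{(T)}_{\mathrm{ch}})$, evaluate that upper bound in closed form via the normalized shifted Chebyshev polynomial, compute $\rho(\bm Q^{(T)}_{\mathrm{s}})$ exactly at the endpoint eigenvalues, and finish by strictly comparing the two closed forms; your expression $2\sigma^T/(1+\sigma^{2T})$ with $\sigma=(\sqrt\kappa-1)/(\sqrt\kappa+1)$ is literally the paper's $\left\{\frac{1}{2}\left[\left(\frac{\sqrt\kappa+1}{\sqrt\kappa-1}\right)^T+\left(\frac{\sqrt\kappa-1}{\sqrt\kappa+1}\right)^T\right]\right\}^{-1}$. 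Where you genuinely depart from the paper is the decisive inequality. The paper proves $(\sqrt\kappa+1)^{2T}+(\sqrt\kappa-1)^{2T}-2(\kappa+1)^T>0$ by expanding in $X=\sqrt\kappa$, observing that the odd-order coefficients cancel, and bounding the even ones with the Vandermonde identity, $\binom{2T}{2t}=\sum_{l}\binom{T}{l}\binom{T}{2t-l}\ge\binom{T}{t}^2\ge\binom{T}{t}$. You instead spot the identity $\frac{\kappa-1}{\kappa+1}=\frac{2\sigma}{1+\sigma^2}$, which turns the same inequality (divide the paper's form by $(\sqrt\kappa+1)^{2T}$) into $\left(\frac{1+\sigma^2}{2}\right)^T<\frac{1+\sigma^{2T}}{2}$, i.e., strict Jensen's inequality for the strictly convex map $x\mapsto x^T$ at the distinct points $1$ and $\sigma^2$. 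The two finishes are algebraically equivalent, but yours is shorter and makes the source of strictness explicit ($T\ge 2$ and $\sigma^2\neq 1$, guaranteed by $\lambda_1\neq\lambda_n$), while the paper's makes the coefficient-level structure visible. Two small remarks: the normalizing constant of the Chebyshev-step polynomial is $1/C_T(-\xi)=(-1)^T/C_T(\xi)$ rather than $1/C_T(\xi)$ (the product equals $1$ at $\lambda=0$), which is immaterial since only $|C_T(-\xi)|=C_T(\xi)$ enters the sup norm; and your sketched alternative via uniqueness of the minimax polynomial is sound in principle but would need a uniqueness statement that the paper's Lemma~\ref{lem_spec} does not actually provide (it only shows the normalized Chebyshev polynomial is \emph{a} minimizer), so your preference for the explicit route is the safer choice.
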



The proof of this theorem is divided into two parts. First, using (\ref{eq_upp}) and the definition of the Chebyshev steps, we show that the spectral radius $\rho(\bm{Q}^{(T)}_{\mathrm{ch}})$
is bounded by 
\begin{align}
\rho(\bm{Q}^{(T)}_{\mathrm{ch}})&\le
\rho^{\mathrm{upp}}(\bm Q^{(T)}_{\mathrm{Ch}})\nonumber\\
& = \left\{\frac{1}{2}\left[ \left(\frac{\sqrt\kappa+1}{\sqrt\kappa-1}\right)^T
+\left(\frac{\sqrt\kappa-1}{\sqrt\kappa+1}\right)^T\right]\right\}^{-1},\label{eq_br}
\end{align}
where $\kappa:=\lambda_n/\lambda_1$ is the condition number of the matrix $\bm A$.
Second, we prove that $\rho^{\mathrm{upp}}(\bm Q^{(T)}_{\mathrm{Ch}})< \rho(\bm Q^{(T)}_{\mathrm{s}}) = [(\kappa-1)/(\kappa+1)]^T$. Further details are in Appendix D.

The convergence rate of GD is defined as $R:=\liminf_{t\rightarrow\infty} \rho(\bm Q^{(t)})^{1/t}$.
From (\ref{eq_br}), the convergence rate $R_{\mathrm{CHGD}}(T)$ of GD with the Chebyshev steps (CHGD) of length $T$
 is bounded by
\begin{equation}
R_{\mathrm{CHGD}}(T) \le\left\{\frac{1}{2}\left[ \left(\frac{\sqrt\kappa+1}{\sqrt\kappa-1}\right)^T
+\left(\frac{\sqrt\kappa-1}{\sqrt\kappa+1}\right)^T\right]\right\}^{-\frac{1}{T}}, \label{eq_ch_rho}
\end{equation}
which is lower than the convergence rate of GD with the optimal constant step size 
$R_{\mathrm{S}} = (\kappa-1)/(\kappa+1)$.
The rate $R_{\mathrm{CHGD}}(T)$ approaches the well-known lower bound of first order methods from above, which is given by
\begin{equation}
R_{\mathrm{low}}:=\frac{\sqrt\kappa-1}{\sqrt\kappa+1}. \label{eq_ch_low}
\end{equation}
This bound is strict because some GD algorithms, such as the momentum method and Nesterov acceleration, can achieve this rate in the strongly convex case.
In Figure~\ref{fig_poi}, we show the convergence rates of GD and CHGD as functions of $\kappa$.
{For  CHGD, some upper bounds of the convergence rates with different $T$ are plotted.}
We confirmed that CHGD with $T\ge 2$ has a smaller convergence rate than GD with the optimal constant step size and the rate of CHGD approaches the lower bound (\ref{eq_ch_rho}) quickly as $T$ increases.
All rates and the lower bound converge to $1$ in the large-$\kappa$ limit.

\begin{figure}[t]
   \centering
   \includegraphics[width=0.98\hsize]{./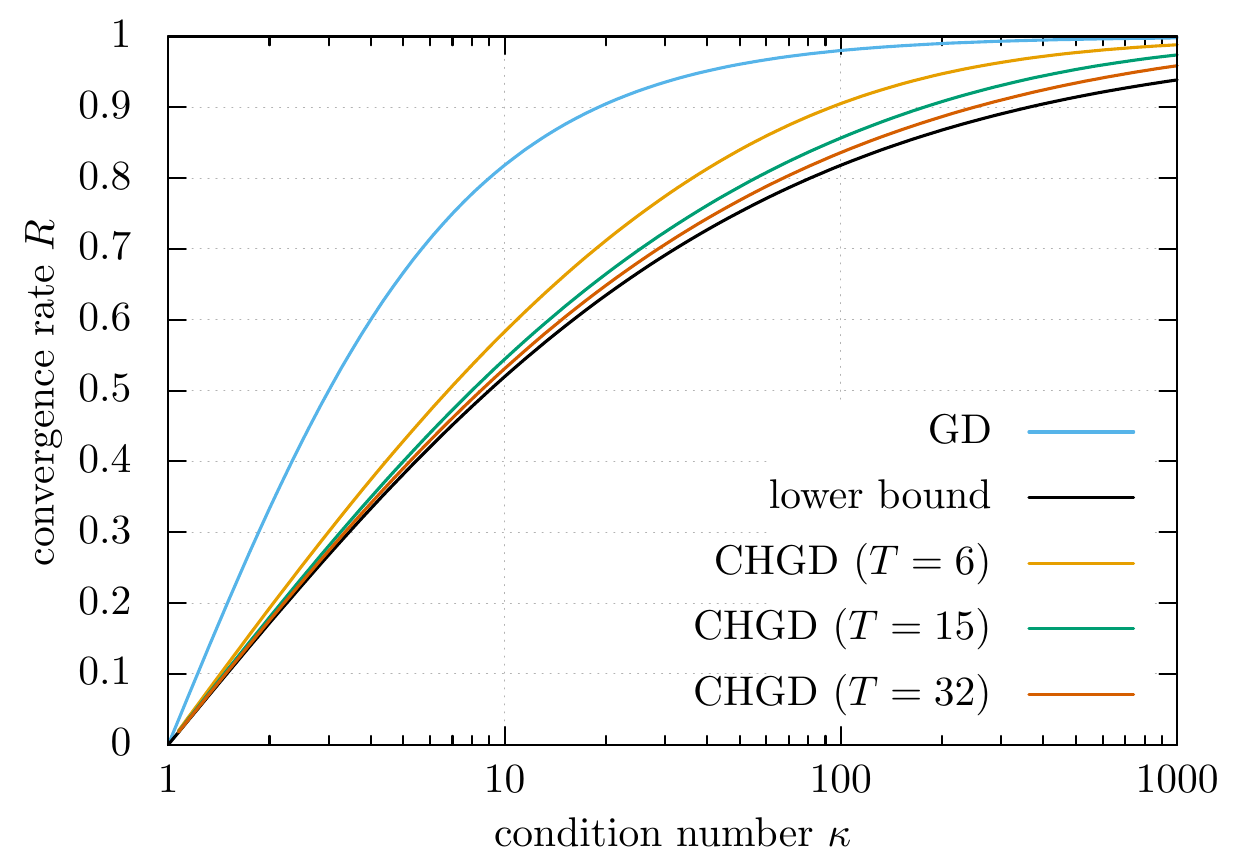}
    \caption{Comparison of convergence rates as functions of condition number $\kappa$. 
}
    \label{fig_poi}
\end{figure}

To summarize, we focus on the fact that the training process of DUGD reduces the spectral radius $\rho(\bm{Q}^{(T)})$
and introduce learning-free Chebyshev steps that minimize its upper bound and improve the convergence rate.

\section{Numerical comparison of DUGD}

In this section, we examine DUGD following the setup in Sec~\ref{sec_pre}. 
The main goal is to examine whether Chebyshev steps explain the nontrivial learned step size parameters in DUGD.
Additionally, we also analyze the convergence property of DUGD and CHGD numerically.

\subsection{Experimental conditions}
We describe the details of the numerical experiments.
DUGD was implemented using PyTorch 1.3~\cite{NIPS2019_9015}. 
Each training datum was given by a pair of the random initial point $\bm{x}^{(0)}\in\mathbb{R}^n$ and optimal solution $\bm{x}_{\mathrm{opt}} = \bm{0}$.
The random initial point was generated as the i.i.d. Gaussian random vector with unit mean and unit variance.
The matrix $\bm{A}$ is generated by
{$\bm{A}\!=\!\bm{H}^T\bm{H}$ with the random Gaussian matrix $\bm{H}\in\mathbb{R}^{m\times n}$} whose elements were i.i.d.  Gaussian random variables with zero mean and variance $1/n$.
Then, the eigenvalue distribution of $\bm A$ followed the Marchenko-Pastur distribution 
as $n\rightarrow\infty$ with $m/n$ fixed to a constant. The maximum and minimum eigenvalues approach 
$(1+\sqrt{m/n})^2$ and $(1-\sqrt{m/n})^2$, respectively.
The matrix $\bm{A}$ was fixed throughout the training process.
  
The training process was executed using incremental training in which we gradually increased the number of layers (iterations $T$) by initializing the value of the parameter $\gamma_t$ ($t=0,\dots T-1$) to a learned value in the former training process (generation)~\cite{metzler2017learned,ito2019trainable}. 
{Incremental training can improve the performance of DUGD compared with conventional one-shot training in which all layers are trained at once.
At the beginning of the training process, all initial values of  $\{\gamma_t\}$ were set to $0.3$ unless otherwise noted. }
For each generation, the parameters were optimized to minimize the MSE loss function $L(\bm{x}^{(T)})$ between the  output of  DUGD and the optimal solution using  $500$ mini batches of size $200$.
We used Adam optimizer~\cite{kingma2014adam} with the learning rate $0.002$.

\subsection{Spectral radius and MSE loss}

We numerically verified the relation between the MSE loss in DUGD and the corresponding spectral radius $\rho(\bm Q^{(T)})$ {in Theorem~\ref{thm_spec}} up to $T\!=\!15$.
Figure~\ref{fig_loss} shows an example of the comparison when $(n,m)\!=\!(300,1200)$. 
To estimate the average MSE loss  $\mathsf{E}_{\bm x^{(0)}}L(\bm{x}^{(T)})$, we used the empirical MSE loss after the $T$th generation. 
The constant $C$ on the r.h.s. of (\ref{eq_mse}) was evaluated numerically (see Appendix B). 
We confirmed that the spectral radius $\rho(\bm Q^{(T)})$ was upper bounded by (\ref{eq_mse}) with the MSE loss $L(\bm{x}^{(T)})$.

\begin{figure}[t]
   \centering
   \includegraphics[width=0.98\hsize]{./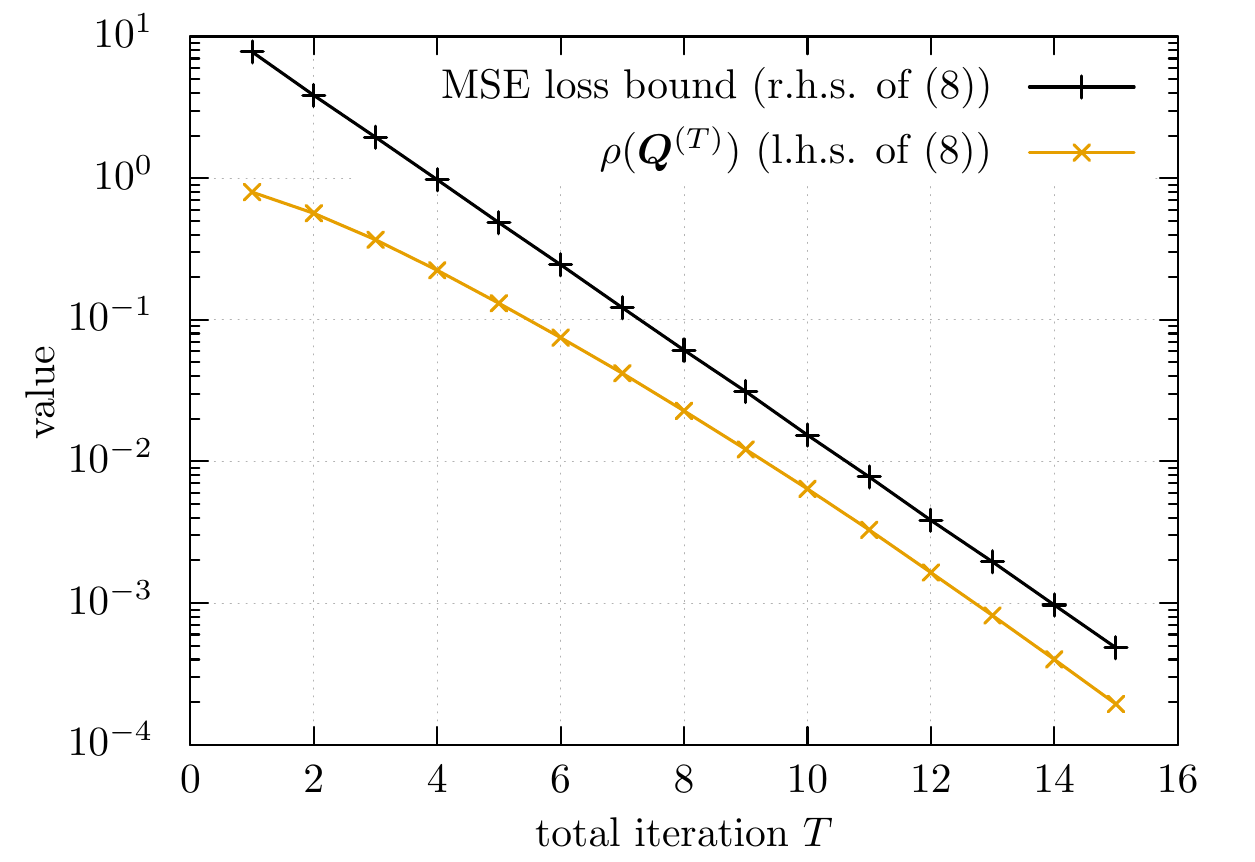}
    \caption{Comparison of the MSE loss bound in Theorem~\ref{thm_spec} and spectral radius $\rho(\bm Q^{(T)})$ in DUGD
     when $(n,m)=(300,1200)$. 
}
    \label{fig_loss}
\end{figure}

\subsection{Learned step sizes}

Next, we examined the learned step size parameter sequence in DUGD.
Figure~\ref{fig_step1} shows {examples of sequences of length $T=6$ and $15$ when $(n,m)\!=\!(300,1200)$}. 
 {To compare parameters directly, they} were rearranged in descending order, although the learned parameters indeed had a zig-zag shape.
The black symbols represent the Chebyshev steps with asymptotic maximal and minimal eigenvalues $\lambda_n=9$ and $\lambda_1=1$ when $m/n=4$.
 Other symbols indicate the learned step size parameters corresponding to five trials, that is, different matrices of $\bm A$ and the training process with different random seeds.
 We found that the learned step sizes agreed with each other, which indicates the self-average property of random matrices and success of the training process.
 More importantly, they were close to the Chebyshev steps, particularly when $\gamma$ was small.
 We found that,  when $T=6$, 
 the gap between the Chebyshev steps and learned step sizes was larger than in the $T=15$ case.

\begin{figure}[t]
   \centering
   \includegraphics[width=0.98\hsize]{./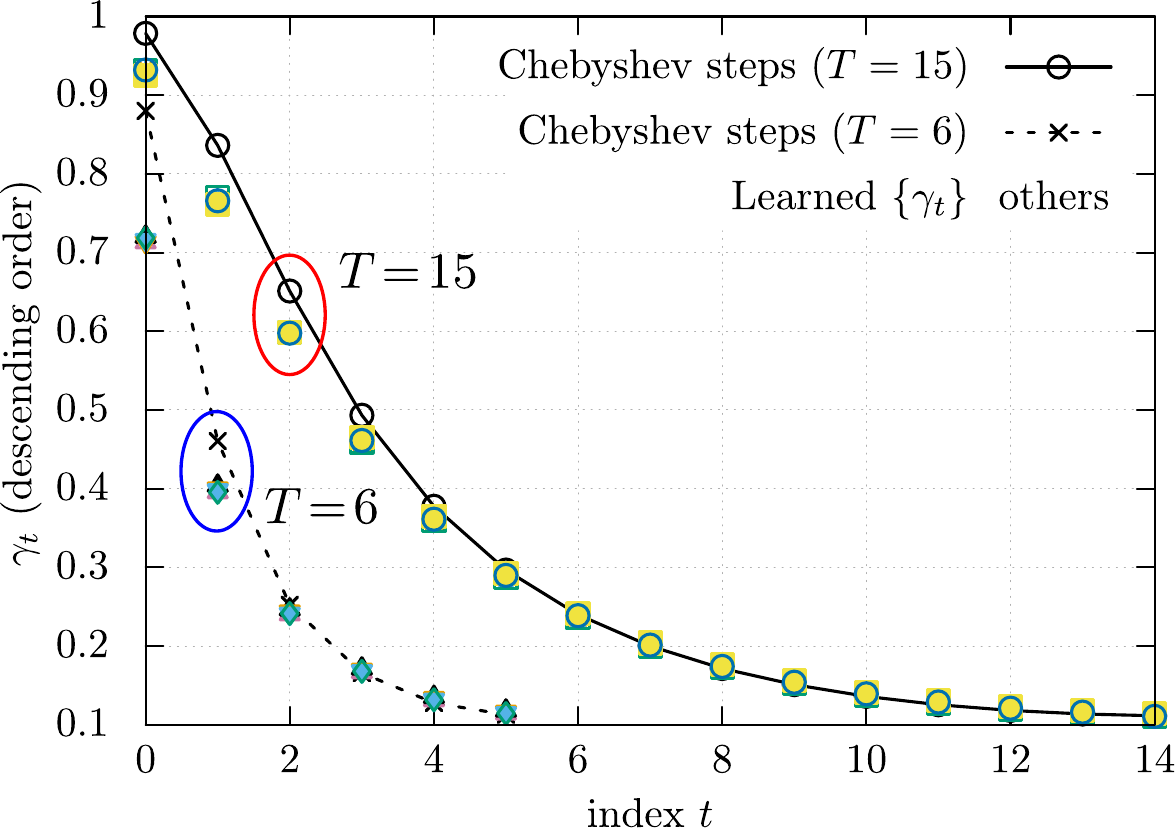}
    \caption{Chebyshev steps (black symbols) and learned step size parameters in DUGD (others; $5$ trials) in  descending order when $(n,m)=(300,1200)$ and $T=6$ (dotted) and $15$ (solid). 
}
    \label{fig_step1}
\end{figure}

{The zig-zag shape of the learned step size parameters is another nontrivial behavior of DUGD, although the order of the   parameters does not affect the MSE performance after the $T$th iteration. 
It is numerically suggested that the shape depends on the training process, particularly on incremental training and initial values of $\{\gamma_t\}$.
In fact, we can determine a permutation of the Chebyshev steps systematically by emulating the training process.
Figure~\ref{fig_step2} shows the learned step size parameters in DUGD ($T\!=\!11$) with different initial values of $\{\gamma_t\}$ and corresponding permuted Chebyshev steps.
We found that they agreed with each other including the order of parameters.
Further details are in Appendix E. 
}

\begin{figure}[t]
   \centering
   \includegraphics[width=0.98\hsize]{./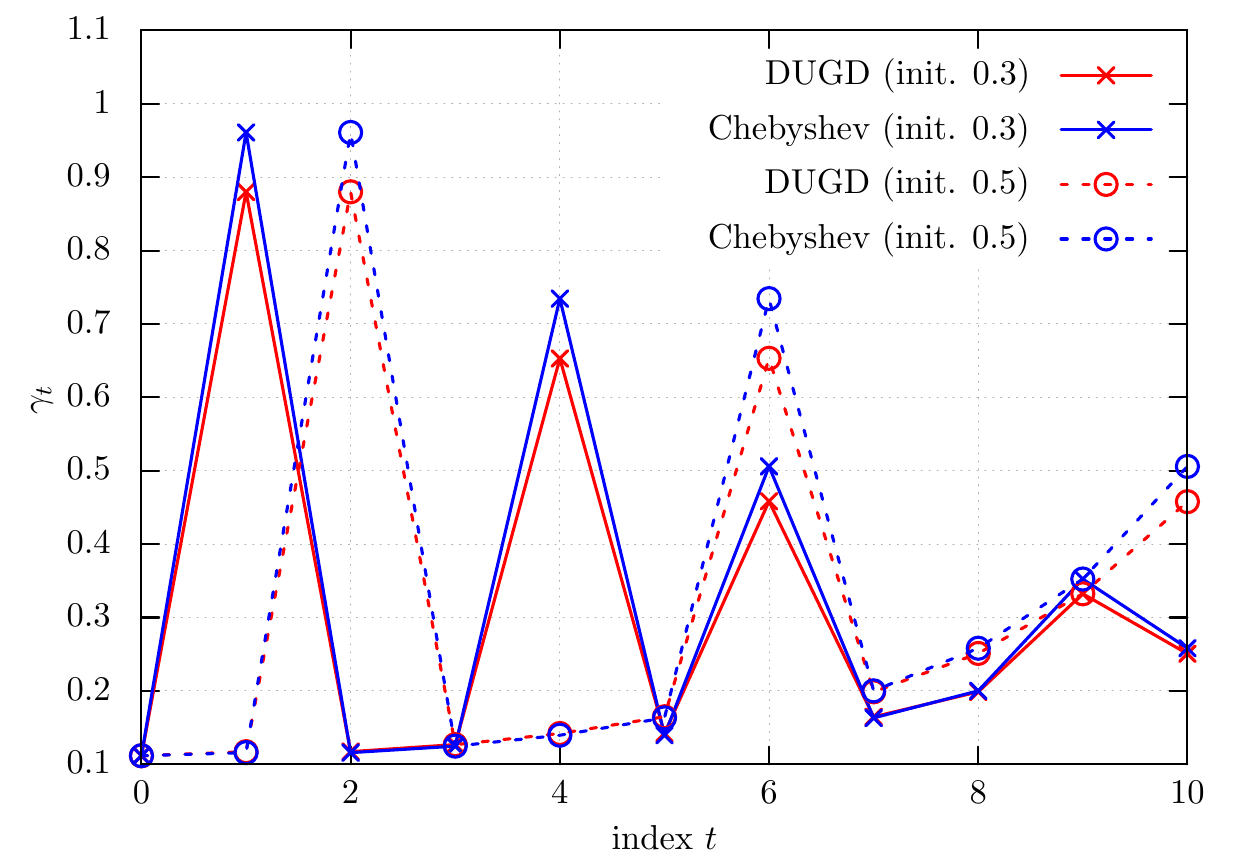}
    \caption{Zig-zag shape of the learned step size parameters (red) in DUGD and  permuted  Chebyshev steps (blue) with different initial values of $\gamma_t$ when $(n,m)=(300,1200)$ and $T=11$. 
}
    \label{fig_step2}
\end{figure}

\begin{figure}[t]
   \centering
   \includegraphics[width=0.98\hsize]{./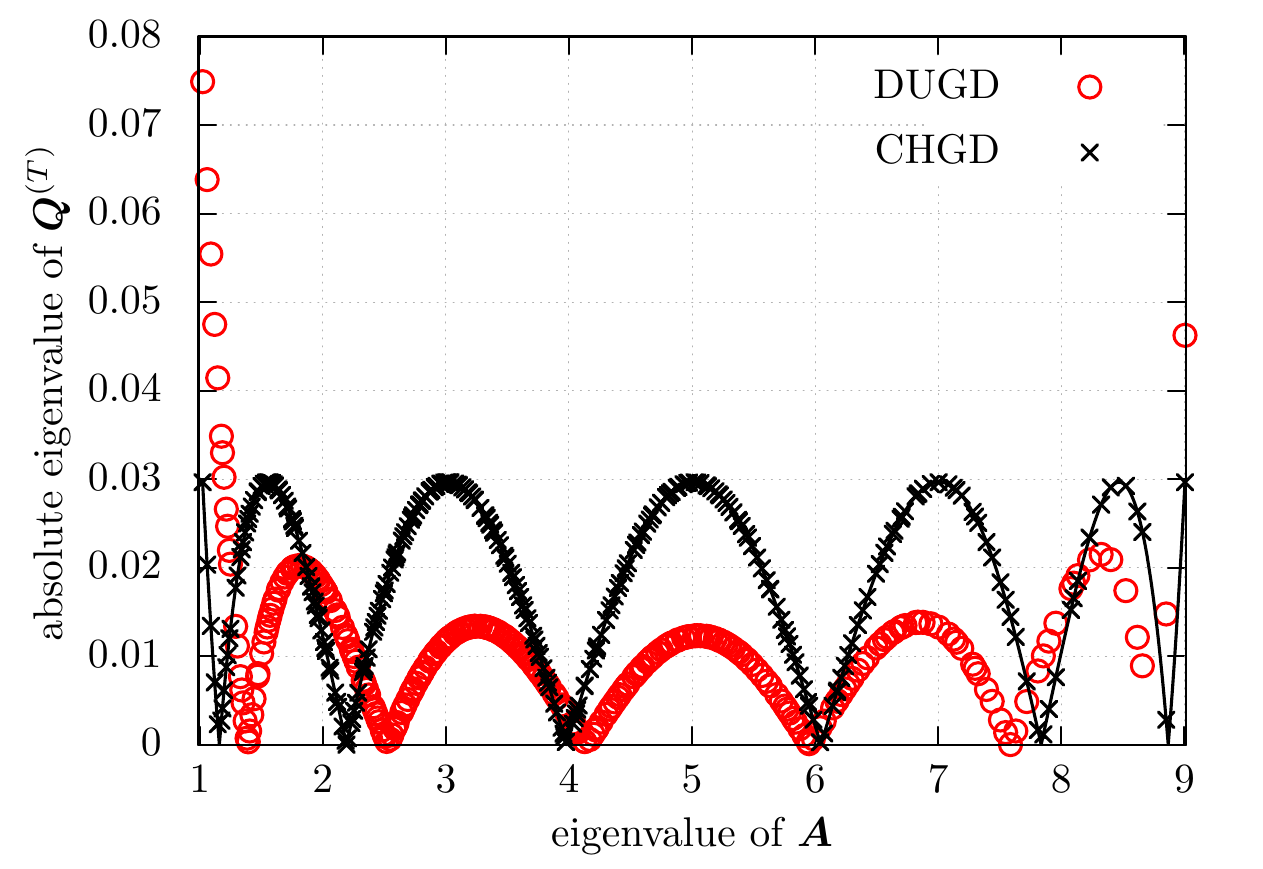}
    \caption{Absolute eigenvalues of $\bm{Q}^{(T)}$ using learned step size parameters (red) 
    as a function of the eigenvalues of $\bm{A}$ when $T=7$,  $\lambda_1=1.0$, and $\lambda_n=9.0$. 
    Black symbols represent the corresponding absolute eigenvalues when the Chebyshev steps are used.
}
    \label{fig_spec_c}
\end{figure}

 To understand the discrepancy between the Chebyshev steps and learned step sizes, we present the 
 absolute eigenvalues of $\bm{Q}^{(T)}$ as a function of the eigenvalues of $\bm{A}$ in Figure~\ref{fig_spec_c}.
 If the step size sequence of length $T$ is given by $\{\gamma_t\}_{t=0}^{T-1}$, then
 the absolute eigenvalues of $\bm{	Q}^{(T)}$ corresponding to the eigenvalue $\lambda$ of $\bm{A}$ are 
 written as
$|\tau(\lambda)| = \left|\prod_{t=0}^{T-1}(1-\gamma_t\lambda)\right|$.
Figure~\ref{fig_spec_c} shows $|\tau(\lambda)|$ when $\{\gamma_t\}_{t=0}^{T-1}$ is a learned step size parameter sequence (red) and Chebyshev steps (black) of  length $T=6$.
To show the spectral density, symbols are located at each eigenvalue $\lambda_i$ of matrix $\bm A$. 
We found that $\{|\tau(\lambda_i)|\}$ of the learned step sizes were smaller than
 those of  the Chebyshev steps in the high spectral-density regime and larger otherwise.
This is because it reduced the MSE loss that all the eigenvalues of matrix $\bm A$ contributed.
By contrast, it increased the maximum value of $|\tau(\lambda_i)|$ corresponding
 to the spectral radius $\rho(\bm Q^{(T)})$.
In this case, the spectral radius of DUGD was $0.074$ whereas that of the Chebyshev steps was $0.029$. 
Recalling that the spectral radius of the optimal constant step size $\gamma^\ast=1/5$ was $\rho(\bm Q_{\mathrm s}^{(T)})\simeq 0.262$,
DUGD accelerated the convergence speed in terms of the spectral radius whereas the
Chebyshev steps further improved the convergence rate.

\subsection{Performance analysis and convergence rate}

Finally, we examined the convergence performance of DUGD and {CHGD, and verified the convergence rate evaluated in Section~\ref{sec_conv}.}

\begin{figure}[t]
   \centering
   \includegraphics[width=0.98\hsize]{./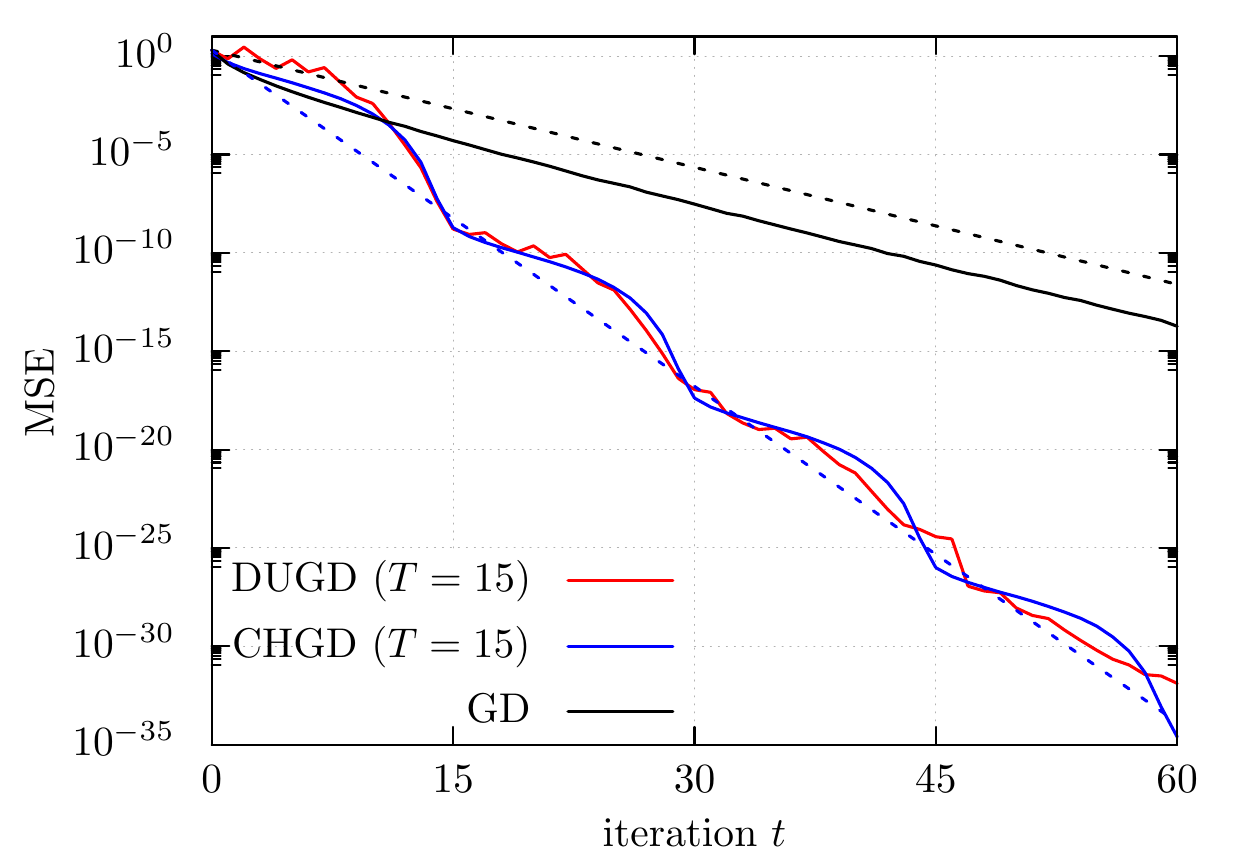}
    \caption{Comparison of the MSE performance of GD algorithms when $(n,m)=(300,1200)$ ($\kappa=8.79$).
    Dotted lines represent the slopes of the convergence rate of CHGD (\ref{eq_ch_rho}) and GD $R_S$.
}
    \label{fig_perf}
\end{figure}

In the experiment, 
the MSE of DUGD was evaluated as a generalization error {over $10^4$ random initial points}.  
In DUGD, we first trained the step sizes with $T=15$ and repeated them for every $15$ iterations.
Similarly, CHGD was executed repeatedly with the Chebyshev steps of length $15$.

{Figure~\ref{fig_perf} shows the MSE performance of DUGD, CHGD, and GD with the optimal constant step size when $(n,m)=(300,1200)$.
We found that DUGD and CHGD converged faster than GD, which shows that a proper step size parameter sequence accelerated the convergence speed.
Although DUGD had slightly better MSE performance than CHGD when $t=15$, 
CHGD exhibited faster convergence as the number of iterations increased.
This is because the spectral radius of CHGD was smaller than that of DUGD, as discussed in the last subsection.
Figure~\ref{fig_perf} also shows the MSE calculated by (\ref{eq_err}) using the convergence rates.
We found that the upper bound of the convergence rate (\ref{eq_ch_rho}) correctly predicted the convergence property of CHGD.}

{To summarize, we numerically verified the theoretical analyses in the last section, and found that the Chebyshev steps qualitatively reproduced a learned step size sequence in DUGD.
 This also indicates that deep unfolding can accelerate the GD algorithm by tuning its step size parameters.
}

\section{Application of Chebyshev steps}

{In this section, we consider an application of CHGD instead of DUGD because CHGD requires no training process.
After we compare CHGD with other accelerated GD algorithms, we demonstrate a practical application of CHGD to ridge regression.}

\subsection{Comparison with accelerated GD}

\begin{figure}[t]
   \centering
   \includegraphics[width=0.98\hsize]{./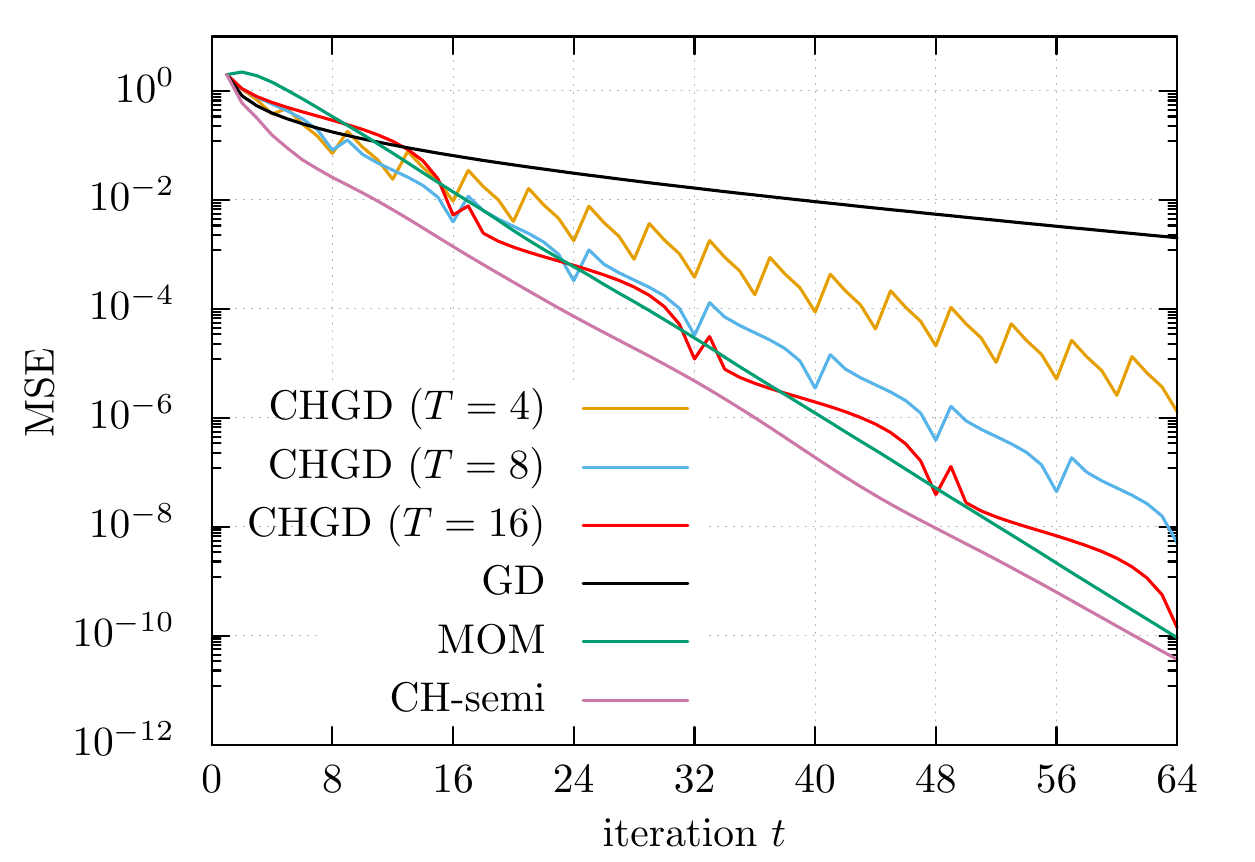}
    \caption{MSE performance of CHGD ($T\!=\!4,8,16$) and other GD algorithms when $(n,m)\!=\!(300,450)$ ($\kappa\!=\!88.1$, $10^4$ samples).
}
    \label{fig_comp}
\end{figure}

{We compared CHGD with two GD algorithms with a momentum term.
One is the momentum method (MOM) whose recursive relation is given by
\begin{equation}
	\bm{x}^{(t+1)} = (\bm{I}_n - \gamma' \bm{A}) \bm{x}^{(t)} +\beta(\bm{x}^{(t)}-\bm{x}^{(t-1)}),
	\label{eq_mom}
\end{equation}
where $\bm x^{(-1)}=\bm 0$, $\gamma':=4/(\sqrt\lambda_1+\sqrt\lambda_n)^2$, and 
$\beta:=((\sqrt{\kappa}-1)/(\sqrt{\kappa}+1))^2$~\cite{polyak1964some}. 
The other is the Chebyshev semi-iterative method (CH-semi) defined as
\begin{align}
	\bm{x}^{(t+1)} &= (\bm{I}_n - \gamma'_{t+1} \bm{A}) \bm{x}^{(t)} +(\gamma'_{t+1}-1) (\bm{x}^{(t)}-\bm{x}^{(t-1)}),\nonumber\\
	\gamma'_{t+1} &= \frac{4}{4-\xi^2\gamma'_t}\: (t\ge 2),	\label{eq_mom}
\end{align}
where $\bm x^{(-1)}=\bm 0$, $\gamma'_1=1$,  $\gamma'_2=2/(2-\xi^2)$, and $\xi = 1-1/\kappa$~\cite{Golub1989}.
These achieve the lower bound (\ref{eq_ch_low}) of the convergence rate.
}

{Figure~\ref{fig_comp} shows the MSE performance of CHGD ($T\!=\!4,8,16$) and other GD algorithms. 
We found that CHGD improved its MSE performance as $T$ increased. In particular, CHGD ($T=16$) had reasonable performance compared with MOM and CH-semi. 
This indicates that CHGD is an \emph{accelerated GD algorithm without momentum terms}.  
}

\subsection{Applications to ridge regression for real data}

We demonstrate CHGD for ridge regression.
Ridge regression, also known as Tikhonov regularization, is a fundamental biased estimation for ill-conditioned problems~\cite{hoerl1970ridge}. 
We consider a noisy linear observation $\bm y=\bm H \bm \beta+\bm n$ with a measurement matrix $\bm{H}\in \mathbb{R}^{m\times n}$.
When $m\gg n$, the LMS problem becomes ill-conditioned, which leads to numerical instability.
Instead, ridge regression is often used, which is defined as       
\begin{equation}
\bm{\hat \beta}:=\mbox{argmin}_{\bm \beta\in \mathbb{R}^n} \frac{1}{2}\|\bm y-\bm H \bm \beta\|_2^2
+\frac{\eta}{2}\|\bm \beta\|_2^2,
\label{eq_rid}
\end{equation}
where $\eta$ is a regularization coefficient controlling weight of the estimate $\bm{\hat{\beta}}$.
As the parameter $\eta$ reduces the condition number of the matrix $\bm H^T\bm H+\eta\bm  I_n$, 
a simple ridge estimator $(\bm H^T\bm H+\eta \bm I_n)^{-1}\bm H^T\bm y$ is available. 
However, it takes $O(n^3)$ computation time to calculate a
pseudo-inverse matrix.
This computational cost increases if we search a proper $\eta$ by sweeping its value.

An alternative approach to solve (\ref{eq_rid}) is to use a GD algorithm. 
Because it contains no inverse of the matrix, GD runs in $O(n^2)$ time. 
The drawback of GD is its slow convergence when $\eta$ is relatively small. In this sense, using a GD algorithm with faster convergence is important. 

To examine the convergence speed of GD in ridge regression, we performed the three algorithms: GD with the optimal constant step size, CHGD, and the momentum method  (\ref{eq_mom}).
As an example, ridge regression was applied to Communities and Crime Dataset~\cite{redmond2002data} in the UCI Machine Learning Repository~\cite{Dua:2019}. After removing elements containing missing values, we have $(n,m)=(98, 1994)$. The moment matrix $\bm{A}=\bm{H}^T\bm H$ had a huge condition number, $\kappa\simeq 7.8\times 10^5$, which indicates that the inverse problem is ill-conditioned.
In the experiments, all the algorithms used the maximum and minimum values of the matrix $\bm A$. 
Using the power method, we can estimate the maximum eigenvalue $\lambda_n$ in $O(n^2)$ time  
instead of computing eigenvalues directly in $O(n^3)$ time.  
For the estimation of the minimum eigenvalue, the power method is also applicable to a shifted matrix $\lambda_n\bm I_n-\bm A$. 

\begin{figure}[t]
   \centering
   \includegraphics[width=0.98\hsize]{./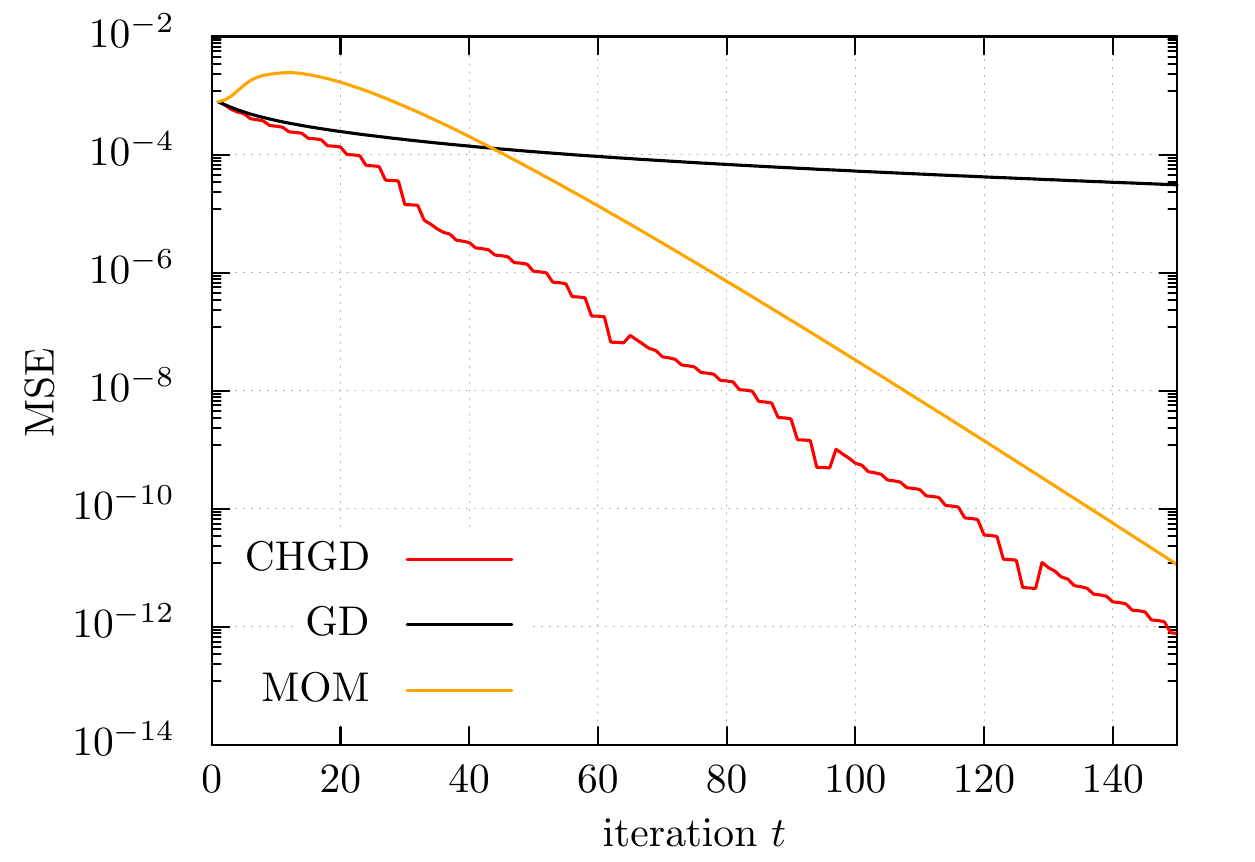}
    \caption{MSE performance of GD algorithms in ridge regression ($\eta\!=\!158.48$) of crime data.
    For CHGD, $T$ was set to $32$.
}
    \label{fig_rid}
\end{figure}

{Figure~\ref{fig_rid} shows the MSE performance between ridge estimation and estimates of the GD algorithms as a function of the iteration index $t$ when $\eta=158.48$.
In CHGD, we set $T=32$ and the order of  the Chebyshev steps was properly permuted (see Appendix F).    
We found that the momentum method takes a small number of iteration steps
 to exhibit better MSE performance than GD, although its convergence speed was much faster for large $t$.
Although the MSE of CHGD formed a wavy shape, the estimates at every $T=32$ steps were reasonably accurate and converged quickly.}
We thus found that CHGD exhibited fast convergence compared with the other algorithms.
{CHGD requires no momentum terms and thus less computational resources, which would be advantageous for a high-dimensional problem. For example, CHGD will be useful to solve a linear equation involving a large sparse covariance matrix in Gaussian process~\cite{williams2006gaussian}.}

\section{Concluding remarks}
 
 In this paper, we studied a nontrivial learned step size sequence that appeared in {DUGD.
 We proved} that minimizing the MSE loss in DUGD reduced the spectral radius related to the convergence rate.
{Additionally, we introduced learning-free Chebyshev steps that minimized the 
the upper bound of the spectral radius. 
We showed that the Chebyshev steps accelerated the convergence speed compared with a naive GD, and the rate approached the strict lower bound of first-order methods in a specific limit.
 The numerical results supported the analyses and showed that the Chebyshev steps reproduced the learned step size sequence in {DUGD, which provides a plausible interpretation of the learned parameters. }
 Moreover, CHGD exhibited a reasonable convergence speed compared with other accelerated GD algorithms, although it did not require a momentum term.
 }

 There are several {open problems}.
{One is the extension of the analysis in this paper to convex and non-convex problems other than quadratic convex problems. 
Another is the application of the Chebyshev steps to other GD-based algorithms, such as stochastic GD. }
For example, the idea of the Chebyshev steps is successfully applicable to the fixed-point iteration~\cite{wadayama2020chebyshev} and Landweber algorithm~\cite{wadayama2020chebyshev2}. 


\section*{Acknowledgement}
The authors warmly thank Mr. S. Khobahi for useful comments on the manuscript.
This work was partly supported by 
JSPS Grant-in-Aid for Scientific Research (B) Grant Number 19H02138 (TW), 
JSPS Grant-in-Aid for Early-Career Scientists Grant Number 19K14613 (ST), 
and the Telecommunications Advancement Foundation (ST).

\appendices
\section{Experimental setting for Figure 1}\label{app1}

We describe the experimental setting for Figure 1 in the main text.

We consider the noiseless measurement $\bm y=\bm H \bm \beta$, where the measurement matrix $\bm H\in \mathbb{R}^{m\times n}$ is the Gaussian random matrix whose elements are i.i.d. Gaussian random variables 
with zero mean and variance $1/n$.
We assume that $(n,m)=(300,600)$ and each element of $\bm \beta$ follows the normal distribution.
DUGD is defined by 
\begin{equation}
\bm{\beta}^{(t+1)}= \bm \beta^{(t)}{+}\gamma_t \bm H^T(\bm y - \bm H \bm \beta^{(t)}) \: (t=0,\dots, T-1),
\label{eq_GD2}
\end{equation}
where $T$ is the total number of iterations (or layers) and  $\{\gamma_t\}_{t=0}^{T-1}$ are trainable step size parameters. The initial point is given by $\bm \beta^{(0)}=\bm 0$.

DUGD was implemented using PyTorch 1.3~\cite{NIPS2019_9015}. 
Each training data were given by a pair consisting of the true solution $\bm{\tilde\beta}$ and the corresponding measurement vector $\bm{\tilde y}$ for a given $\bm H$.
The training process of DUGD was conducted using incremental training in which we gradually increased the number of layers (iterations $T$) by initializing the value of the parameters $\{\gamma_t\}$ ($t=0,\dots T-1$) according to the learned values in the former training process (generation) to improve the performance of DUGD.
At the beginning of the training process, all the initial values of  $\{\gamma_t\}$ were set to $0.3$. 
In each generation, the parameters were optimized using Adam optimizer~\cite{kingma2014adam} with a learning rate of $0.002$ to minimize the MSE loss function $\|\bm{\tilde \beta}-\bm\beta^{(T)}\|_2^2/n$ between the output of  DUGD using $\bm{\tilde{y}}$ and the true solution $\bm{\tilde{\beta}}$.
As mini-batch training, $500$ mini-batches of size $200$ were used in each generation.
{The MSE was evaluated as a generalization error {over $10^4$ random samples}.}

\section{Proof of Theorem 3.2}\label{app2}

\begin{proof}
As the matrix $\bm A$ is a Hermitian matrix, it can be decomposed by $\bm A = \bm U\bm \Lambda \bm U^\ast$ using the unitary matrix $\bm U$ and diagonal matrix $\bm \Lambda:=\mathrm{diag}(\lambda_1,\dots,\lambda_n)$ with eigenvalues $\lambda_1,\dots,\lambda_n$ of $\bm A$.
Then, we have 
\begin{align}
\mathsf{E}_{\bm x^{(0)}}L(\bm x^{(T)})
&= \frac{1}{n} \mathsf{E}_{\bm x^{(0)}}\left\|\prod_{t=0}^{T-1}(\bm I_n-\gamma_t \bm U\bm \Lambda \bm U^\ast) \bm x^{(0)}\right\|_2^2\nonumber\\
&= \frac{1}{n}  \mathsf{E}_{\bm x^{(0)}}\left\|\bm U\left(\prod_{t=0}^{T-1}(\bm I_n-\gamma_t \bm \Lambda )\right) \bm U^\ast\bm x^{(0)}\right\|_2^2\nonumber\\
&= \frac{1}{n}  \mathsf{E}_{\bm x^{(0)}}\left\| \left(\prod_{t=0}^{T-1}(\bm I_n-\gamma_t \bm \Lambda )\right) \bm U^\ast\bm x^{(0)}\right\|_2^2\nonumber\\
&:=  \frac{1}{n}  \mathsf{E}_{\bm x^{(0)}}\left\| \bm D^{(T)} \bm U^\ast\bm x^{(0)}\right\|_2^2
\end{align}
where $\bm{D}:=\mathrm{diag}(\prod_{t=0}^{T-1} (1-\gamma_t \lambda_i))$ is the diagonal matrix whose $(i,i)$-element is an eigenvalue of $\bm{Q}^{(T)}$ corresponding to the eigenvalue $\lambda_i$ of $\bm{A}$.
In the last line, we use $\bm U\bm U^\ast=I_n$.
Introducing the column vectors of $\bm U$ by  $\bm{U} := (\bm u_1, \dots,\bm u_n)$, 
$ \bm U^\ast\bm x_0 = (\bm u_1^\ast \bm x_0, \dots,\bm u_n^\ast \bm x_0)^T$ holds.

Using $j:=\mathrm{argmax}_{i}|\prod_{t=0}^{T-1}(1-\gamma_t \lambda_i)|$, we have
\begin{align}
L(\bm x^{(T)})
&= \frac{1}{n}  \mathsf{E}_{\bm x^{(0)}}\left\|\sum_{i=1}^n \bm D^{(T)}_{i,i} \bm u_i^\ast\bm x^{(0)}\right\|_2^2\nonumber\\
&\ge \frac{1}{n}  \left|\prod_{t=0}^{T-1}(1-\gamma_t \lambda_j) \right|^2
\mathsf{E}_{\bm x^{(0)}}\left\|\bm u_j^\ast\bm x^{(0)}\right\|_2^2\nonumber\\
&:= \frac{C'}{n}  \left|\prod_{t=0}^{T-1}(1-\gamma_t \lambda_j) \right|^2,
\end{align}
where $C':= \mathsf{E}_{\bm x^{(0)}}\|\bm u_j^\ast\bm x^{(0)}\|^2(<\infty)$ is a positive constant because the probability density function of $\bm{x}^{(0)}$ is assumed to be isotropic.
Recalling that $\rho(\bm Q^{(T)})=\max_{i}\left|\prod_{t=0}^{T-1}(1-\gamma_t\lambda_i)\right|=\left|\prod_{t=0}^{T-1}(1-\gamma_t \lambda_j) \right|$ from 
Lemma 3.1, we have $\mathsf{E}_{\bm x^{(0)}}L(\bm x^{(T)}) \ge C'\rho(\bm Q^{(T)})^2/n$, 
which is identical to (8).
 \end{proof}

The proof indicates that the constant $C$ is explicitly given by $C= (\mathsf{E}_{\bm x^{(0)}}\|\bm u_j^\ast\bm x^{(0)}\|^2)^{-1/2}$ with $j:=\mathrm{argmax}_{i}|\prod_{t=0}^{T-1}(1-\gamma_t \lambda_i)|$.
In the special case in which each element of $\bm{x}^{(0)}$ is an i.i.d. random variable, $C$ can be easily calculated. 
This fact is used in the numerical experiment in Section 4.2. In this case, we have $C=1/\sqrt{2}$ because each element of $\bm{x}^{(0)}$ follows the Gaussian distribution with unit mean and unit variance.

\section{Proof of Theorem 3.4}\label{app3}

Before we provide the proof of Theorem 3.4, we prove the following lemma related to the minimax property of  Chebyshev polynomials.
The Chebyshev polynomial $C_n(x)$ of degree $n$ ($n=0,1,\dots$) is defined as a recursive relation $C_{n+1}(x):=2x C_{n}(x)-C_{n-1}(x)$ with $C_0(x):=1$ and $C_1(x):=x$. Let $C[a,b]$ ($a<b$) be the Banach space defined as 
$\ell_\infty$-norm, i.e., $\|f\|:=\max_{x\in[a,b]}|f(x)|$.

\begin{lemma}\label{lem_spec}
Suppose that $b>a>0$.
Let $D\subset C[a,b]$ be a subspace of polynomials of $z$ on $[a,b]$ represented by 
$\prod_{k=0}^{n-1}(1-\alpha_k z)$ for any $\alpha_0,\cdots,\alpha_{n-1}\in \mathbb{R}$.
We define the Chebyshev steps of length $n$ as
\begin{equation}
\gamma_k := \left[\frac{a+b}{2}+\frac{b-a}{2}\cos\left(\frac{2k+1}{2n}\pi\right)\right]^{-1} \:(k=0,1,\dots,n-1),
\label{eq_chstep}
\end{equation} 
and a normalized Chebyshev polynomial $\hat\varphi(z)$ of degree $n$ as
\begin{equation}
\hat \varphi(z) 
:=  \frac{C_n\left(\frac{2z-a-b}{b-a}\right)}{C_n\left(-\frac{a+b}{b-a}\right)}. \label{eq_norm_ch}
\end{equation}
Then, the following statements hold.

(a) The function $\hat \varphi:[a,b]\rightarrow \mathbb{R}$ belongs to $D$ as a result of setting $\alpha_k=\gamma_k^{-1}$ ($k=0,1,\dots,n-1$).

(b) The function $\hat \varphi:[a,b]\rightarrow \mathbb{R}$ is a polynomial in $D$ that minimizes the norm $\|\cdot\|$.
\end{lemma}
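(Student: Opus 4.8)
The plan is to handle parts (a) and (b) in turn, exploiting the fact that $\hat\varphi$ is, up to an affine change of variable and a fixed scaling, nothing but a Chebyshev polynomial. The key device throughout is the affine map $u(z):=\tfrac{2z-a-b}{b-a}$, which is a bijection of $[a,b]$ onto $[-1,1]$.

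For part (a), I would first recall the classical roots of $C_n$, namely the $n$ numbers $x_k=\cos\!\big(\tfrac{2k+1}{2n}\pi\big)\in(-1,1)$, $k=0,\dots,n-1$. Solving $u(z)=x_k$ gives exactly $z=z_k:=\tfrac{a+b}{2}+\tfrac{b-a}{2}\cos\!\big(\tfrac{2k+1}{2n}\pi\big)=\gamma_k^{-1}$. Hence the numerator $C_n(u(z))$ of $\hat\varphi$ is a degree-$n$ polynomial in $z$ whose root set is precisely $\{\gamma_k^{-1}\}_{k=0}^{n-1}$. The denominator is the constant $C_n(u(0))=C_n\!\big(-\tfrac{a+b}{b-a}\big)$, which is nonzero because $\tfrac{a+b}{b-a}>1$, so $\hat\varphi$ is a genuine degree-$n$ polynomial with $\hat\varphi(0)=1$ and roots $\{\gamma_k^{-1}\}$. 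Factoring such a polynomial and normalizing at $z=0$ yields $\hat\varphi(z)=\prod_{k=0}^{n-1}\big(1-\gamma_k z\big)$, which exhibits $\hat\varphi$ as a member of $D$ and proves (a).

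For part (b), I would argue via equioscillation rather than directly quoting the monic minimax theorem, since the side condition defining $D$ normalizes at the point $z=0$ (which lies outside $[a,b]$) rather than fixing a leading coefficient. As $u$ runs over $[-1,1]$, $C_n(u)=\cos(n\arccos u)$ attains $\pm1$ with alternating signs at the $n+1$ extremal abscissae $u_j=\cos(j\pi/n)$; pulling these back through $u(\cdot)$ produces $n+1$ distinct points $\zeta_0>\zeta_1>\cdots>\zeta_n$ in $[a,b]$ at which $\hat\varphi(\zeta_j)=(-1)^j/C_n(u(0))$, so $\hat\varphi$ attains $\pm\|\hat\varphi\|$ in strict alternation with $\|\hat\varphi\|=1/|C_n(u(0))|$. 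The conclusion is then a root count: suppose some $\psi\in D$ satisfied $\|\psi\|<\|\hat\varphi\|$. Every element of $D$ has degree at most $n$ and value $1$ at $z=0$, so $\delta:=\hat\varphi-\psi$ has degree at most $n$ and $\delta(0)=0$. At each $\zeta_j$ we have $|\psi(\zeta_j)|\le\|\psi\|<|\hat\varphi(\zeta_j)|$, so $\delta(\zeta_j)$ inherits the alternating sign of $\hat\varphi(\zeta_j)$; by the intermediate value theorem $\delta$ then has a zero in each of the $n$ intervals $(\zeta_{j+1},\zeta_j)$, hence at least $n$ zeros inside $[a,b]$. Together with the zero at $z=0$, which lies outside $[a,b]$ because $a>0$, this gives at least $n+1$ distinct zeros of a polynomial of degree at most $n$, forcing $\delta\equiv0$ and thus $\psi=\hat\varphi$, contradicting $\|\psi\|<\|\hat\varphi\|$. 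Therefore $\hat\varphi$ minimizes $\|\cdot\|$ over $D$, proving (b).

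The step I expect to be the main obstacle is the bookkeeping in part (b): one must verify carefully that $\hat\varphi$ genuinely equioscillates at $n+1$ points of $[a,b]$ (which is exactly where the affine-normalized Chebyshev structure does the work) and, crucially, that the normalization point $z=0$ falls strictly outside $[a,b]$ — guaranteed by the hypothesis $b>a>0$. It is this extra ``free'' zero of $\delta$ at the origin that pushes the zero count past the degree and closes the contradiction; without the condition $a>0$ the argument would break. Everything else is routine once these two facts are in place, and reducing to $[-1,1]$ via $u(\cdot)$ keeps the Chebyshev identities clean.
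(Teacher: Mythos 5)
Your proposal is correct and follows essentially the same route as the paper: part (a) by identifying the pulled-back Chebyshev zeros with the reciprocals $\gamma_k^{-1}$ and normalizing at $z=0$, and part (b) by equioscillation at the $n+1$ extremal points plus a zero count that uses $\delta(0)=0$ together with $a>0$ to exceed the degree bound. The only cosmetic difference is bookkeeping — you count $n+1$ total zeros against degree $n$, while the paper says the vanishing constant term caps the zeros inside $[a,b]$ at $n-1$ — which is the same contradiction.
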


\begin{proof}
(a) The Chebyshev polynomial $C_n(x)$ of degree $n$ has $n$ zeros in $(-1,1)$, which are given by
$x_k=\cos((2k+1)\pi/(2n))$ ($k=0,\dots,n-1$)~\cite[Section 2.2]{mason2002chebyshev}; that is, $C_n(x)= \prod_{k=0}^{n-1}(x-{x}_k)$ holds.
Then, using the affine transformation from $[a,b]$ to $[-1,1]$, we have
\begin{equation}
C_n\left(\frac{2z-a-b}{b-a}\right) =\prod_{k=0}^{n-1}\left(z-\frac{1}{\gamma_k}\right). \label{eq_cn_zero}
\end{equation}
Because $C_n(-(a+b)/(b-a)) =\prod_{k=0}^{n-1}(-\gamma_k^{-1})$,
we have $\hat \varphi(z) = \prod_{k=0}^{n-1}(1-\gamma_k z)$, which indicates that the statement holds.

(b) We show that $\hat\varphi(z)$ is a minimizer of $\|\cdot\|$ among functions in $D$ by indirect proof.
Assume that there exists $\tau(z)\in D$ of {at most} degree $n$ except for $\hat\varphi(x)$ satisfying $\|\hat\varphi(z)\|>\|\tau(z)\|$.
As $\check x_k:=\cos(k\pi /n)$ ($k=0,\dots,n$) are the extreme points in $[-1,1]$ (including both edge points) of $C_n(x)$ ~\cite[Section 2.2]{mason2002chebyshev},
$\check z_k := (a+b)/2 + (b-a)\check x_k/2(\in [a,b])$ are those of $\hat\varphi(z)$.
Particularly, the sign of the extremal value at $\check x_k$ (or $\check z_k$) changes alternatively; 
$C_n(\check x_n)=1$, $C_n(\check x_{n-1})=-1$, $C_n(\check x_{n-2})=1$, and so on (or 
$\hat\varphi(\check z_n)=\varphi_0$, $\hat\varphi(\check z_{n-1})=-\varphi_0$, $\hat\varphi(\check z_{n-2})=\varphi_0$, and so on when $\varphi_0:=1/C_n(-(a+b)/(b-a))$) hold~\cite[Lemma 3.6]{mason2002chebyshev}.
The assumption indicates that $n+1$ inequalities, $\tau(\check z_{n})< \varphi_0$, $\tau(\check z_{n-1})> -\varphi_0$, $\tau(\check z_{n-2})< \varphi_0$, and so on hold; that is, a polynomial $\delta(z):=\tau(z)-\hat\varphi(z)$ of degree at most $n$ has $n$ zeros in $[a,b]$.

However, because $\tau(z), \hat\varphi(z)\in D$, the constant term of $\delta(z)$ is equal to zero, which  suggests  that $\delta(z)$ has at most $n-1$ zeros in $[a,b]$.
This results in a contradiction of the assumption and shows that  $\hat \varphi:[a,b]\rightarrow \mathbb{R}$ minimizes the norm $\|\cdot\|$ in $D$.  
\end{proof}

It is straightforward to prove Theorem 3.4 from this lemma.  
\begin{proof}[proof of Theorem 3.4]
Using the notation of Lemma~\ref{lem_spec}, we notice that $a=\lambda_1$, $b=\lambda_n$, and
\begin{equation}
 \rho^{\mathrm{upp}}(\bm Q^{(T)}) = \max_{ \lambda\in [\lambda_1,\lambda_n] }\left|\prod_{t=0}^{T-1}(1-\gamma_t\lambda)\right| 
= \left\|\prod_{t=0}^{T-1}(1-\gamma_t\lambda) \right\|.
\end{equation}
From Lemma~\ref{lem_spec}, the Chebyshev steps of length $T$ form a sequence that minimizes $\rho^{\mathrm{upp}}(\bm Q^{(T)})$. 
\end{proof}

\section{Proof of Theorem 3.6}\label{app4}


\begin{proof}
To simplify the notation, we use $\kappa:= \lambda_n/\lambda_1(>1)$ as a condition number of the matrix $\bm A$.
From (10) and Lemma~\ref{lem_spec}, we have
\begin{align}
\rho(\bm Q^{(T)}_{\mathrm{Ch}})&\le
\rho^{\mathrm{upp}}(\bm Q^{(T)}_{\mathrm{Ch}})\nonumber\\
&= \max_{x\in[\lambda_1,\lambda_n]}\hat\varphi(x) \nonumber\\
&= \left|C_T\left(-\frac{\kappa+1}{\kappa-1}\right)\right|^{-1} \nonumber\\
&= \left|(-1)^T\frac{(\sqrt\kappa+1)^{2T}+(\sqrt{\kappa}-1)^{2T}}{2(\kappa-1)^T}\right|^{-1}\nonumber\\
&= \left\{\frac{1}{2}\left[ \left(\frac{\sqrt\kappa+1}{\sqrt\kappa-1}\right)^T
+\left(\frac{\sqrt\kappa-1}{\sqrt\kappa+1}\right)^T\right]\right\}^{-1}. \label{eq_qt}
\end{align}
We use the properties that $|C_n(x)|\le 1$ holds for $\forall x\in[-1,1]$ and 
$C_n(x)=[(x+\sqrt{x^2-1})^n+(x-\sqrt{x^2-1})^n]/2$ holds for $|x|>1$, 
and the identity 
$x\pm \sqrt{x^2-1}=(\sqrt \kappa \pm 1)^2/(\kappa-1)$ when $x=(\kappa+1)/(\kappa-1)$.

By contrast, the spectral radius when the optimal constant step size $\gamma_t^\ast = 2/(\lambda_1+\lambda_n)$ is used can be calculated directly.
We have
\begin{equation}
\rho(\bm Q^{(T)}_{\mathrm{s}})= \prod_{t=0}^{T-1} \max_{i} \left|1-\gamma^\ast \lambda_i \right|
=\left(\frac{\kappa-1}{\kappa+1} \right)^T.
\end{equation}

Finally, we show that $\rho^{\mathrm{upp}}(\bm Q^{(T)}_{\mathrm{Ch}})< \rho(\bm Q^{(T)}_{\mathrm{s}})$ holds.
This is equivalent to the following relation for $\kappa>1$:   
\begin{align}
&\frac{1}{2}\left[ \left(\frac{\sqrt\kappa+1}{\sqrt\kappa-1}\right)^T
+\left(\frac{\sqrt\kappa-1}{\sqrt\kappa+1}\right)^T\right] 
-\left(\frac{\kappa+1}{\kappa-1}\right)^T \nonumber\\
&=\frac{(\sqrt\kappa+1)^{2T}+(\sqrt\kappa-1)^{2T}-2(\kappa+1)^{T}}{2(\kappa-1)^{T}}> 0.    
\label{eq_in11}
\end{align}
If we set $X:=\sqrt \kappa (>1)$, the $(2t)$th coefficient of $(X+1)^{2T}/2+(X-1)^{2T}/2- (X^2+1)^T$
is given by $\binom{2T}{2t}-\binom{T}{t}$. Additionally, its coefficients of odd orders are equal to  zero. 
From the Vandermonde identity:
\begin{equation}
\binom{m+n}{r} = \sum_{k=0}^{r}\binom{m}{k}\binom{n}{r-k},
\end{equation}
we find 
\begin{equation}
\binom{2T}{2t} = \sum_{l=0}^{2t}\binom{T}{l}\binom{T}{2t-l}\ge \binom{T}{t}^2\ge \binom{T}{t},
\end{equation}
(equality holds only when $t=0,T$), which indicates that (\ref{eq_in11}) holds.

We thus prove that $\rho(\bm Q^{(T)}_{\mathrm{Ch}})\le \rho^{\mathrm{upp}}(\bm Q^{(T)}_{\mathrm{Ch}})< \rho(\bm Q^{(T)}_{\mathrm{s}})$ when $\lambda_1<\lambda_n$.
\end{proof}

\section{Order of the learned step sizes}\label{app5}

In this subsection, we describe how to determine a permutation of the Chebyshev steps that 
reproduces a zig-zag pattern of the learned step size parameters of DUGD.
A key observation is the dynamics of trainable step size parameters in the training process.

Figure~\ref{fig_zig} shows the dynamics of the trainable step size parameters. During incremental training, the number of trainable parameters increases at every $2000$ mini batches; that is, DUGD of $T$ iterations (or layers) is trained from the $2000(T-1)$th mini batches to the $2000T$th mini batches, which we call the $T$th generation of incremental training.  
After the $T$th generation ends, we add an initialized parameter $\gamma_{T+1}$ to the learned parameters $\{\gamma_{t}\}_{t=0}^T$ to start a new generation.
As we can see in Figure~\ref{fig_zig}, the trainable parameters immediately move toward a (sub)optimal point to reduce the MSE loss function, which forms a staircase shape of $\gamma_t$. 
Although there are $T!$ optimal points of step sizes by permutation symmetry, it is numerically suggested that DUGD chooses an optimal point so that the ``distance'' (discussed later) from the former learned parameters is minimized. 
This seems natural because these parameters are updated by a GD-based optimizer and the convergent point depends on the initial point.

\begin{figure}[t]
   \centering
   \includegraphics[width=0.98\hsize]{./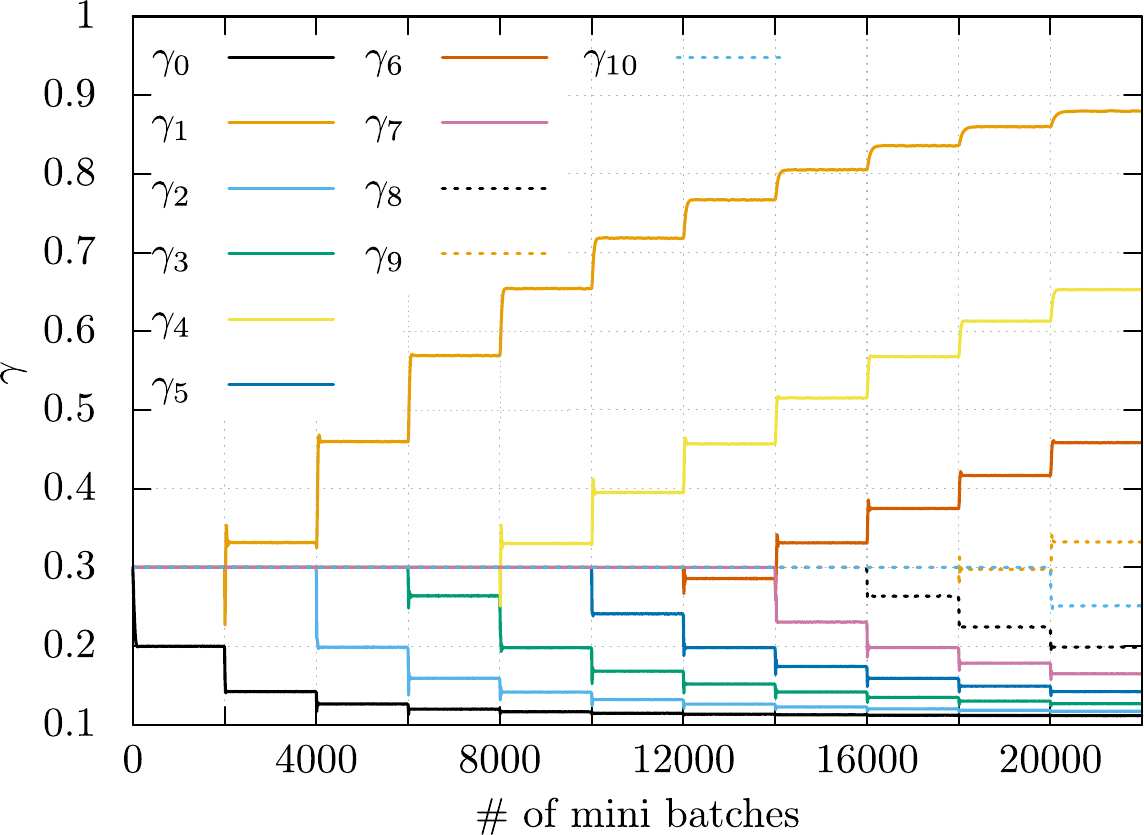}
    \caption{Dynamics of  step size parameters $\{\gamma_t\}_{t=0}^{10}$ of DUGD ($T=11$) when $(n,m)=(300,1200)$. Horizontal line represents the number of mini batches in incremental training. All the initial values were set to $0.3$. In incremental training, the number of learning parameters increased by one for every $2000$ mini batches  fed to DUGD. For example, only $\gamma_0$ (black) was trained during the first $2000$ mini batches and
 $\gamma_0$ and  $\gamma_1$ (orange) were trained during the next $2000$ mini batches.     
}
    \label{fig_zig}
\end{figure}

From these observations, we attempt to emulate the order of the learned step size parameters using the Chebyshev steps.
We consider a training process of DUGD that minimizes the spectral radius $\rho(\bm Q^{(T)})$ instead of the MSE loss function. 
Although it seems practically difficult, we assume that we obtain the Chebyshev steps as an approximate solution.
The problem is which order of the Chebyshev steps is chosen at each generation.
We thus determine an order of the Chebyshev steps that minimizes the ``distance'' from a given initial point to a point whose elements are permuted Chebyshev steps. 
As a measure of distance, we use a simple Euclidean norm because an actual distance defined by an energy landscape is difficult to calculate.
The details of the algorithm are shown in Algorithm~\ref{alg_0}. 
To emulate incremental training, the length of the Chebyshev steps is gradually increased. As an initial point ($\gamma_1,\dots,\gamma_{t+1}$) of length $t+1$,  $\gamma_1, \dots, \gamma_t$ are set to the optimally permuted Chebyshev steps in the last generation and $\gamma_{t+1}$ is set to a given initial value.
Then, an optimal permutation of the Chebyshev steps of length $t+1$ is searched so that its distance from the initial point takes the minimum value. The point is used as an initial point of the next generations. 
As shown in Figure 6, this successfully reproduces the zig-zag shape of learned step size parameters that depends on an initial value of $\gamma_t$.

\begin{algorithm}[t]
   \caption{Emulation of incremental training using the Chebyshev steps}
   \label{alg_0}
\begin{algorithmic}
   \STATE {\bfseries Input:} maximum eigenvalue $\lambda_1$, minimum eigenvalue $\lambda_n$, number of iterations $T$, initial value $u$
   \STATE Initialize $\bm{c} = (2/(\lambda_1+\lambda_n))$
   \FOR{$t=2$ {\bfseries to} $T$}
   \STATE Set $v$ to a sufficiently large number	   
   \STATE $\bm{d} = (\bm c, u) $ 
   \STATE Define $\bm c$ as Chebyshev steps of length $t$ for $\lambda_1$ and $\lambda_n$
   \FOR{$\pi$ {\bfseries to all possible permutations} $\Pi(t)$}
   \STATE Define $\bm P_{\pi}$ as the permutation matrix of $\pi$
   \IF{$v>\|\bm{d}- \bm{P}_{\pi}\bm{c}\|_2$}
   \STATE $v=\|\bm{d}- \bm{P}_{\pi}\bm{c}\|_2$, $\bm P=\bm P_{\pi}$
   \ENDIF
   \ENDFOR
   \STATE $\bm c=\bm P\bm c$
   \ENDFOR
   \STATE {\bfseries Return:} $\bm c$

\end{algorithmic}
\end{algorithm}

\section{Order optimization of the Chebyshev steps}\label{app6}

\begin{algorithm}[tb]
   \caption{Permutation search}
   \label{alg_2}
\begin{algorithmic}
   \STATE {\bfseries Input:} maximum eigenvalue $\lambda_n$, minimum eigenvalue $\lambda_1$, number of iterations $T:=2^s$ ($s\in \mathbb{N}$)
   \STATE Set $v$ to a sufficiently large number	   
   \STATE Define $\bm c$ as the Chebyshev steps of length $T$ for $\lambda_1$ and $\lambda_n$.
   \FOR{$(a,b,c)$ satisfying $1\le a,b,c \le T-1$, $a\equiv 1$ (mod $4$), and $b$: odd}
   \STATE Define $\bm P$ as the permutation matrix corresponding to $(a,b,c)$
   \IF{$v>\rho_{\mathrm{temp}}(T)$}
   \STATE $v=\rho_{\mathrm{temp}}(T)$, $\bm Q=\bm P$
   \ENDIF
   \ENDFOR
   \STATE {\bfseries Return:} $\bm P\bm c$, $(a,b,c)$
\end{algorithmic}
\end{algorithm}

The order of the Chebyshev steps is important practically to ensure numerical stability.
Figure~\ref{fig_order} shows the MSE performance of CHGD with different orders of the Chebyshev steps. 
We found that using an ascending order led to a search point with huge values that might cause a digit loss.
To avoid this instability, we need to permute the step size parameter sequence.
{It is noted that the performance of CHGD itself is ensured every $T$ iterations if numerical errors are ignorant.}
Because the total number of permutations rapidly diverges depending on $T$, we focus on permutations defined by
\begin{equation}
\pi(t+1) \equiv a\pi(t) + b \quad (\mbox{mod } T), 
\end{equation}
 where $\pi(0):=c\in \{0,1,\dots,T-1\}$ is an initial index.
Then, the sequence $\{\pi(t)\}_{t=0}^{T-1}$ is a permutation of $\{0,1,\dots,T-1\}$ 
if $b$ is odd, $a\equiv 1$ (mod $4$), and $T=2^s$ ($s\in\mathbb{N}$).
 We then search parameters $a,b,c$ that minimizes the maximum temporal spectral radius of $\bm{W}^{(t)}$, that is,
\begin{equation}
\rho_{\mathrm{temp}}(T):= \max_{t\in\{0,1,\dots, T-1\}}  \left(\max_{\lambda\in[\lambda_1,\lambda_n]} \left|\prod_{t'=0}^{t}(1-\gamma_{t'}\lambda )\right|\right).
\end{equation}
Algorithm~\ref{alg_2} shows the pseudocode of the searching algorithm.

\begin{figure}[t]
   \centering
   \includegraphics[width=0.65\hsize]{./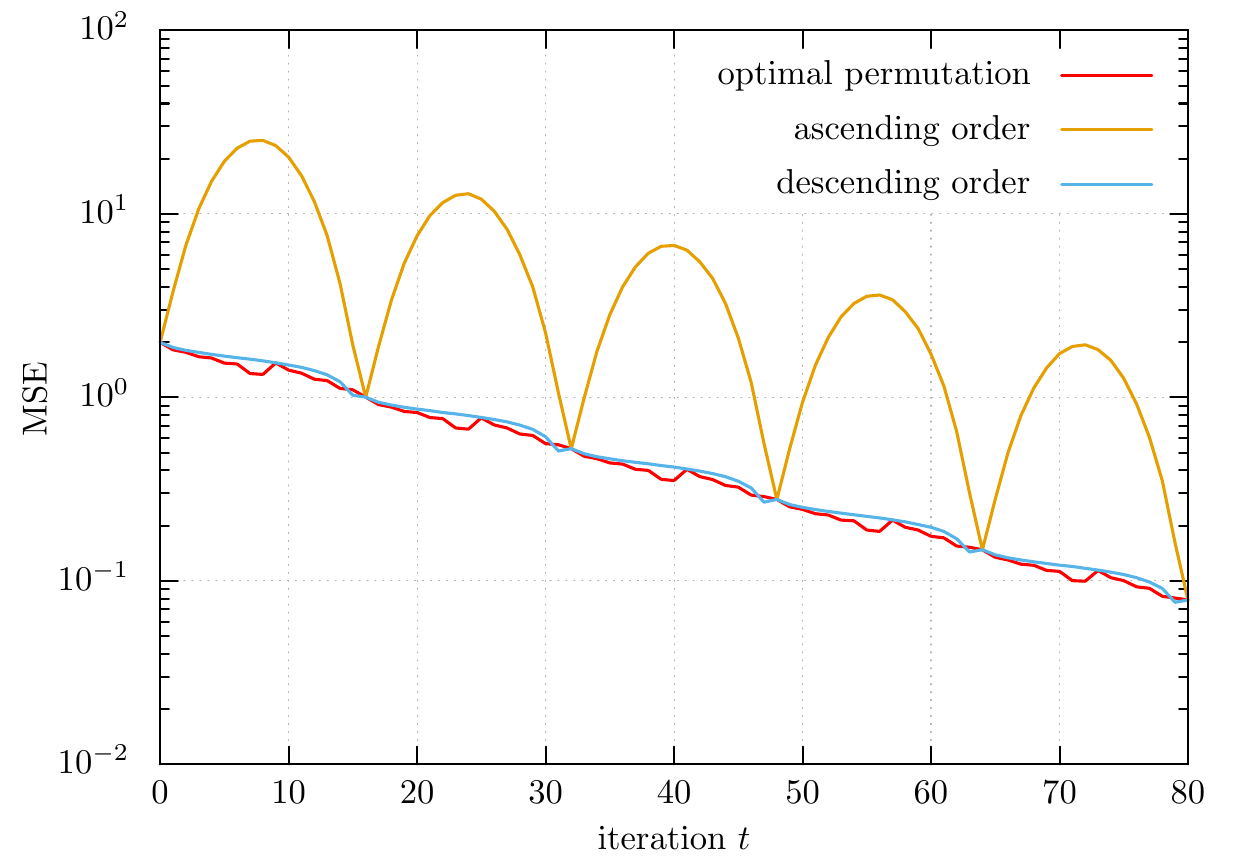}
    \caption{MSE performance of CHGD ($T=16$) with optimal permutation (red), ascending order (orange), and descending order (blue) when $(n,m)=(500,800)$. They have the same MSE every $T=16$ iterations.
}
    \label{fig_order}
\end{figure}

Figure~\ref{fig_order} shows the MSE performance of CHGD ($T=16$) with and without permutation when $n=500$ and $m=800$. The MSE is evaluated using $200$ samples.
In this case, the asymptotic value of the condition number was $\kappa=8.54$ and  the optimal parameters are given by $(a,b,c)=(1,9,7)$.
In CHGD without permutation, the step sizes was given by (\ref{eq_chstep}) in a descending or ascending manner. Particularly, GD with ascending Chebyshev steps had a relatively large MSE. 
The numerical results show that CHGD with optimal permutation decreased the MSE effectively.
  
An example of $(a,b,c)$ for different $T$, $\lambda_1=1$, and $\lambda_n=\kappa$
is given in Table~\ref{tab_1}.
We found that the optimal choice of $(a,b,c)$ depended on $\lambda_1$ and $\lambda_n$. 
In Section 5.2, the Chebyshev steps were permuted according to $(a,b,c)=(1,11,10)$.

\begin{table}[tb]
\caption{Searched permutation parameters $(a,b,c)$ of Chebyshev steps when $\lambda_1=1$ and $\lambda_n=\kappa$.}
\label{tab_1}
\vskip 0.15in
\begin{center}
\begin{small}
\begin{sc}
\begin{tabular}{c|cccc}
\toprule
 & $T=8$ & $T=16$ & $T=32$ \\
\midrule
$\kappa=4$   & $(1,5,3)$& $(1,9,7)$& $(1,17,15)$\\
$\kappa=16$ & $(1,5,3)$ & $(1,9,7)$& $(1,17,15)$\\
$\kappa=64$    &$(1,3,2)$ & $(1,9,7)$& $(1,17,15)$ \\
$\kappa=128$    &$(1,3,2)$ & $(13,3,6)$& $(1,17,15)$        \\
\bottomrule
\end{tabular}
\end{sc}
\end{small}
\end{center}
\vskip -0.1in
\end{table}


\end{document}